\DeclareMathAlphabet{\mathpzc}{OT1}{pzc}{m}{it}
\newcounter{mylabelcounter}
\newcommand{\labelText}[2]{%
#1\refstepcounter{mylabelcounter}%
\immediate\write\@auxout{%
  \string\newlabel{#2}{{1}{\thepage}{{\unexpanded{#1}}}{mylabelcounter.\number\value{mylabelcounter}}{}}%
}%
}
\newtheorem{theorem}{Theorem} 
\newtheorem{proposition}[theorem]{Proposition}
\newtheorem{definition}[theorem]{Definition}
\newtheorem{remark}[theorem]{Remark}
\newtheorem{lemma}[theorem]{Lemma}
 \newcommand{\reals}{\mathbb{R}}
\newcommand{\ceil}[1]{\left\lceil #1 \right\rceil}
\DeclareMathOperator*{\argmin}{argmin}
\DeclareMathOperator*{\essup}{ess\,sup} 
 \newcommand{\metric}{\mathtt{d}}
\newcommand{\absv}[1]{\left| #1  \right|}
\newcommand{\tabsv}[1]{| #1  |}
\DeclareMathOperator*{\E}{\mathbb{E}}
\newcommand{\Exp}[1]{\E\left[ #1 \right ]}
\newcommand{\ExpD}[2]{\E_{#1}\left[ #2 \right ]}
\DeclareMathOperator{\Rad}{Rad} 
\DeclareMathOperator{\RadL}{Rad} 
\DeclareMathOperator{\Cover}{Cov} 
\DeclareMathOperator{\SetCover}{SCover}
\DeclareMathOperator{\cX}{\mathcal{X}}
\DeclareMathOperator{\cZ}{\mathcal{Z}}
\DeclareMathOperator{\cA}{\mathcal{A}}
\DeclareMathOperator{\cD}{\mathcal{D}}
\newcommand{\hyp}{\mathcal{H}} 
 \DeclareMathOperator{\RR}{\mathbb{R}}
\DeclareMathOperator{\NN}{\mathbb{N}}
\newcommand{\dimM}[1]{{\rm \overline{dim}}_{\mathcal{M}}\left( #1  \right)}
\newcommand{\dimfM}[1]{{\rm dim}_{\rm{fM}}\left( #1  \right)}
 \newcommand{\twoCover}{\mathcal{U}}
\newcommand{\est}{\hat \theta}
\newcommand{\estmodel}{\hat \Theta}
\newcommand{\emprisk}{\hat R}
\newcommand{\loss}{\ell}
\newcommand{\event}{\mathcal{E}}
\newcommand{\ind}[1]{\mathds{1}[#1]}   
 \newcommand{\samplespace}{\mathcal{Z}}
 \newcommand{\ball}[1]{\mathbb{B}\left( #1\right)}
\newcommand{\Conv}[1]{\mathrm{Conv}\left( #1\right)}
\newcommand{\bd}[1]{ \mathrm{bd}\left( #1\right)}
 \newcommand{\half}{\frac{1}{2}}
\title{Generalization Guarantees via Algorithm-dependent Rademacher Complexity}
 \author{
 Sarah Sachs \thanks{University of Amsterdam,
Korteweg-de Vries Institute for Mathematics}\\
  \texttt{s.c.sachs@uva.nl}
 \and 
 Tim van Erven  \footnotemark[1]\\
  \texttt{tim@timvanerven.nl}
   \and
     Liam Hodgkinson
    \thanks{ University of Melbourne,
     School of Mathematics and Statistics}\\
    \texttt{lhodgkinson@unimelb.edu.au}
    \and
    Rajiv Khanna
    \thanks{Purdue University,
    Department of Computer Science}\\
    \texttt{rajivak@purdue.edu}
    \and
    Umut \c{S}im\c{s}ekli
    \thanks{Inria, CNRS, Ecole Normale Supérieure, PSL Research University}\\
    \texttt{umut.simsekli@inria.fr}
 }
\begin{document}

\maketitle

\begin{abstract}%
Algorithm- and data-dependent generalization bounds are required to
explain the generalization behavior of modern machine learning
algorithms. In this context, there exists information theoretic
generalization bounds that involve (various forms of) mutual
information, as well as bounds based on hypothesis set stability. We
propose a conceptually related, but technically distinct complexity
measure to control generalization error, which is the empirical
Rademacher complexity of an algorithm- and data-dependent hypothesis
class. Combining standard properties of Rademacher complexity with the
convenient structure of this class, we are able to (i) obtain novel
bounds based on the \emph{finite fractal dimension}, which (a) extend
previous fractal dimension-type bounds from continuous to finite
hypothesis classes, and (b) avoid a mutual information term that was
required in prior work; (ii) we greatly simplify the proof of a recent
dimension-independent generalization bound for stochastic gradient
descent; and (iii) we easily recover results for VC classes and
compression schemes, similar to approaches based on conditional mutual
information.
\end{abstract}

\begin{keyword}%
  Generalization error, Rademacher complexity, Fractal geometry.
\end{keyword}

 \section{Introduction}

The generalization error of a learning algorithm is the gap between its
average loss (empirical risk) on a training sample and its expected loss
(risk) on a fresh data point from the same probability distribution. If
the algorithm selects its parameter estimates $\est$ from a set
$\Theta$, then the classical approach to control generalization error is
to derive deviation bounds that hold uniformly over all $\theta \in
\Theta$ \citep{shalev2014understanding}.
However,~\citet{zhang2021understanding} empirically illustrate that in
modern machine learning settings such as neural networks, such an
approach yields overly pessimistic, and sometimes vacuous error bounds
of limited practical value. Hence it has been made clear that
generalization bounds that would reflect the practical observations
should take into account the effects of the data sample $S^n$ and also
the choice of the learning algorithm (e.g., stochastic gradient
descent).

\citet{russo2016controlling,xu2017information} initiated a fertile line
of research by developing algorithm-dependent generalization bounds with
information-theoretic tools. Let $R(\est)$ and $\emprisk(\est)$ denote
the risk and empirical risk of an algorithm. They show that
the
generalization error is at most
\begin{align}
\Exp{R(\est) - \hat R(\est,S^n)} \lesssim \sqrt{\frac{I(\est;S^n)}{n}}, \label{eqn:mi_bound}
\end{align}
where $I(\theta;S^n)$ denotes the \emph{mutual information} (MI) between
the data sample $S^n$ and the output of the algorithm $\est$. This shows
that a weak statistical dependence between the data sample and the
algorithm output implies better generalization. Recently, by using tools
from rate-distortion theory, the bound \eqref{eqn:mi_bound} was linked
to compression, implying that if the algorithm output is compressible in
some sense, it implies good generalization
\citep{pmlr-v178-sefidgaran22a}, in line with the results of
\citep{arora2018stronger,suzuki2018spectral,suzuki2019compression,barsbey2021heavy}.

Looking at the problem from a different angle,
\citet{simsekli2020hausdorff} take into account
the topological structure of the outputs of the learning algorithm by
using tools from fractal geometry \citep{falconer2004fractal}. More
precisely, let $\Theta_{S^n} \subset \Theta$ denote the full trajectory
of a continuous time version of stochastic gradient descent on the
sample $S^n$. Then they prove
generalization bounds based on  a fractal dimension of
$\Theta_{S^n}$, which were later extended and improved by \citet{birdal2021intrinsic,camuto2021fractal,hodgkinson2022generalization,dupuis2023}. 
 Their bounds are of the following general form\footnote{\citet{hodgkinson2022generalization} also proved an in-expectation version of \eqref{eqn:fd_bound} which involved the weaker $I(\Theta_{S^n}; S^n)$ instead of $I_\infty(\Theta_{S^n}; S^n)$.}: with probability at least $1-\delta$,
 \begin{align}
 \label{eqn:fd_bound}
 \sup_{\theta \in \Theta_{S^n}}\absv{R(\theta) - \hat R(\theta,S^n)} \lesssim \sqrt{\frac{\dim{\Theta_{S^n}} + I_\infty(\Theta_{S^n}; S^n) + \log(1/\delta)}{n}} ,
 \end{align}
 where $I_\infty$ denotes the total MI, which is larger than the regular MI, and $\dim$ denotes some notion of fractal dimension (e.g., the
 Minkowski, Hausdorff, or persistent homology dimension). In addition to
 the statistical dependence between the data sample and the random
 hypothesis set $\Theta_{S^n}$ as measured by $I_\infty$, these bounds
 imply that the worst-case error can be controlled by the fractal
 dimension of $\Theta_{S^n}$. This fractal dimension is linked to
 the statistical or topological properties of the learning algorithm; in
 particular, if the algorithm is a stochastic optimizer such as
 stochastic gradient descent.

 While these bounds help to shed light on modern learning problems from
 different viewpoints, the MI terms that they contain can be troublesome
 for several reasons. First, MI can be infinite, which renders the
 bounds vacuous \citep[Section 5]{Bassily2017LearnersTU}. Secondly,
 while the fractal dimension in \eqref{eqn:fd_bound} can be linked to
 concrete and computable properties of the learning algorithm, the MI term typically cannot be given a topological interpretation, which means
 the bounds as a whole also do not have a fully topological
 interpretation.

In order to address the first shortcoming,
\citet{pmlr-v125-steinke20a} introduced the \emph{conditional mutual
information} (CMI) which in contrast to MI is always finite. They show
that the CMI implies generalization under much weaker assumptions than
MI: for instance, it can be controlled if the learning algorithm is a
compression scheme \citep{Littlestone2003RelatingDC} or under
distributional stability assumptions such as differential privacy
\citep{Bassily_2016,Dwork_2015}. As an alternative,
\citet{pmlr-v178-sefidgaran22a} introduced the notion of \emph{lossy
compressible learners}, which also circumvents the cases where MI can be
infinite. Despite these improvements, it remains unclear how to
relate MI-based bounds to topological concepts.\footnote{We note that
\citet[Corollary 7]{pmlr-v178-sefidgaran22a} link MI to fractal
geometry through rate-distortion theory \citep{kawabata1994rate}.
But their result involves the \emph{marginal} distribution of
$\est$, which has limited practical interest.} Furthermore,
\citet{haghifam2022limitations} recently showed that it is impossible to
obtain minimax rates for gradient descent by using the current
information-theoretic frameworks, and argued that new frameworks need to
be developed.   

\paragraph{Contributions}
In this study, we propose an alternative mathematical framework for analyzing algorithm- and data-dependent hypotheses. We make the following contributions:  
\begin{itemize}[itemsep=0pt,topsep=5pt,leftmargin=*,align=left]
\item We prove a generalization bound with respect to an \emph{algorithm-dependent Rademacher complexity} (ARC), Lemma~\ref{keyThm} in Section~\ref{BoundRad}. Interestingly, our construction is conceptually related
to the conditioning in CMI. It is also technically similar to a special
case of the Rademacher complexity for data-dependent hypothesis sets
introduced by \citet{FosterEtAl2019-HSS}. This special case arises when
their hypothesis sets are instantiated as singletons that contain the
output of a learning algorithm on the sample. For this case, our result
is a refinement of their Theorem~1.\footnote{We note that our setting is
not the main focus of \citet{FosterEtAl2019-HSS}, who mostly consider
stability properties for data-dependent hypothesis classes.} Both of
these relations are discussed in more detail below
Definition~\ref{ARCdefn} in Section~\ref{BoundRad}.

 \item Our main contribution is to demonstrate the flexibility of the ARC. In Section~\ref{applications}, we derive several new generalization results and re-obtain known results.  More precisely, 
\begin{itemize}[itemsep=0pt, topsep=0pt]
\item In Section~\ref{SectionFiniteMinkowskiBound} we link ARC to
fractal dimensions using the tools developed by
\citet{alonso2015hausdorff}. This allows us to extend previous fractal
dimension-type bounds from continuous to finite hypothesis classes
without introducing any mutual information term as in
\eqref{eqn:fd_bound}.
\item For stochastic gradient descent on strongly convex and smooth
losses, we use ARC to obtain a greatly simplified proof of a
dimension-independent generalization bound by
\citet{ParkSimsekliErdogdu2022generalization} (Section~\ref{sec:sgd}).
\item For learning algorithms that are compression schemes or produce output in a
VC class, we show that we can obtain the same generalization properties
as those obtained for CMI by \citet{pmlr-v125-steinke20a} (Sections~\ref{sec:compression} and
\ref{sec:vc}).\footnote{\citet{pmlr-v125-steinke20a} can also obtain guarantees for differentially private algorithms. While we
have not investigated whether such results can also be obtained via ARC,
we suspect that information-theoretic tools might be more natural to
analyze differentially private algorithms.}
\end{itemize}
\end{itemize}
We believe that the proposed framework provides a promising alternative
to information-theoretic approaches and opens up future directions,
which we discuss in Section~\ref{sec:conc}. Some of the proofs are
delegated to the appendix.

\section{Preliminaries}

\paragraph{Setting}

Given a sample $S^n = (Z_1,\ldots,Z_n) \in \samplespace^n$ of
independent, identically distributed (i.i.d.)\ observations with common
distribution $\cD$, and a loss function $\ell : \Theta \times
\samplespace \to \reals$, let
\[
  R(\theta) = \ExpD{Z \sim \cD}{\ell(\theta,Z)}
  \qquad\text{and}
  \qquad
  \emprisk(\theta, S^n) = \frac{1}{n} \sum_{i=1}^n\ell(\theta,Z_i)
\]
denote the risk and the empirical risk, respectively. 
We assume throughout that $\Theta$ is a subset of a complete separable
metric space $(\cX,\metric)$, and that $\ell$ is measurable.
A common
instantiation in supervised learning is that $Z = (X,Y) \in \mathcal{X}
\times \mathcal{Y}$ and $\theta$ indexes a class of hypotheses $\hyp =
\{h_\theta : \mathcal{X} \to \mathcal{Y} \mid \theta \in \Theta\}$. Then
the loss is typically defined as a composite loss function via
$\ell(\theta,z) := \tilde \ell(h_\theta(x),y)$ for some choice of
$\tilde \ell : \mathcal{Y} \times \mathcal{Y}$.
A learning algorithm
$\cA$ is a measurable function that maps a sample $S^n$ to an
estimate $\est = \cA(S^n) \in \Theta$. 
We assume that $\cA$ is a
deterministic algorithm for now; at the start of Section~\ref{extRandomOutputs} we
discuss how our results can be extended to randomized algorithms. \paragraph{Rademacher complexity } A standard approach to control the generalization error relies on the
(empirical) Rademacher complexity of the whole class $\Theta$ of
possible outputs of the algorithm:
\begin{equation}\label{eqn:rademacher-def}
  \RadL(\Theta,S^n) = \frac{1}{n}
  \ExpD{\sigma}{\max_{\theta \in \Theta} \sum_{i=1}^n
  \sigma_i \ell(\theta, Z_i)},
\end{equation}
where the expectation is over Rademacher random variables $\sigma =
(\sigma_1,\ldots,\sigma_n) \in \{-1,+1\}^n$, which are i.i.d.\ with
$\Pr(\sigma_i = -1) = \Pr(\sigma_i = +1) = 1/2$. 
It is well known that the Rademacher complexity can be upper bounded in terms of a covering number. 
\paragraph{Covering numbers} 

For any $C \subset \cX$, we will denote its $\epsilon$-covering number by
$\Cover(C,\metric,\epsilon)$ and the corresponding $\epsilon$-cover by
 $\SetCover(C,\metric,\epsilon)$. The box-covering number refers to the
 special case that
 $\cX = \reals^k$ and $\metric$ is the distance induced by the
 $\ell_\infty$-norm. 
 
\section{Generalization via Algorithm-dependent Rademacher Complexity}
 
\label{BoundRad}

In this section, we refine the standard Rademacher bounds on the
generalization error by measuring the Rademacher complexity not on
$\Theta$ but on a smaller set $\estmodel^n$ that depends on the
algorithm and the data, which is defined as follows: consider two
independent samples $S^n_- = (Z_1^{-1},\ldots,Z_n^{-1})$ and $S^n_+ =
(Z_1^{+1},\ldots,Z_n^{+1})$, and, for any $\sigma \in \{-1,+1\}^n$, let
$S^n_\sigma = (Z_1^{\sigma_1},\ldots,Z_n^{\sigma_n})$ denote a combined
sample where $\sigma_i$ determines whether to take $Z_i^{\sigma_i}$ from
sample $S_-^n$ or from sample $S_+^n$. Then
\[
  \estmodel^n := \big\{\cA(S^n_\sigma) : \sigma \in \{-1,+1\}^n\big\}
  \subset \Theta,
\]
which depends on $S_-^n$ and $S_+^n$ and contains all possible outputs
of the algorithm $\cA$ that can be obtained by combining them with
different choices of $\sigma$.  
 \begin{definition}
\label{ARCdefn}
We define the \emph{Algorithm-dependent Rademacher Complexity} (ARC) as
the Rademacher complexity $\RadL(\estmodel^n, S_+^n)$ of the algorithm-
and data-dependent set $\estmodel^n$.
\end{definition}
In our analysis, $S_-^n$ acts as a \emph{ghost sample}, which is
independent of $S_+^n$. 
This allows us to shrink the effective size of the function class from all possible functions indexed
by the parameters $\Theta$ to a finite class of functions indexed by
$\estmodel^n$ that can be realized by the algorithm by exchanging data
points between $S_-^n$ and $S_+^n$ according to Rademacher variables
$\sigma$. The ARC can therefore be seen as measuring the complexity of
the algorithm when only $\sigma$ is unknown, conditional on the
supersample $(S_-^n,S_+^n)$. This is conceptually similar to the
conditional mutual information of \citet{pmlr-v125-steinke20a}, except
that we use Rademacher complexity where they use mutual information and
therefore we get into a technically fully distinct analysis. There is
also a strong connection to the Rademacher complexity for data-dependent
hypothesis sets introduced by \citet{FosterEtAl2019-HSS}. When their
Theorem~1 is specialized to an algorithm-dependent result for our
setting, it gives a bound in terms of the larger class
  $\bar \Theta^n = \{ \cA(S) : S \subset (S_-^n,S_+^n) \text{ such
  that } |S| = n\}$.\footnote{In the notation of
  \citet{FosterEtAl2019-HSS}, our setting corresponds to the case where
  $\hyp_S = \{\cA(S)\}$, $m=n$ and $U = (S,T)$. Then their $\bar
  \hyp_{U,m} = \hyp_{S,T}$, which, in our notation equals the class of
  hypotheses indexed by $\bar \Theta^n$.}
Since $\estmodel^n \subset \bar \Theta^n$, our result is strictly
better, but it is not evident that the improvement is very large.
For instance, our analysis of SGD in Section~\ref{sec:sgd} will still go
through even with the larger set $\bar \Theta^n$ at the cost of an
additional $\log n$ factor in the bound. See
Remark~\ref{rem:hypstability}.

The following key technical result proved in
Appendix~\ref{app:crc_proof}, shows that the ARC can control the
generalization error in the same way as the classical Rademacher
complexity does for fixed hypothesis classes. To state it, we let
$\essup X = \inf \{a : \Pr(X > a) = 0\}$ denote the \emph{essential
supremum} of any random variable $X$.
 \begin{lemma}[Key technical lemma]
 \label{keyThm}
 The expected generalization error of any (deterministic) algorithm
 $\cA : \cZ^n \to \Theta$ with output $\est = \cA(S^n)$ is bounded by
\begin{equation}\label{eqn:rad-exp}
 \ExpD{S^n}{R(\est) - \emprisk(\est,S^n)} \leq 2 \ExpD{S^n_-,S^n_+}{\RadL(\estmodel^n, S^n_+)}.
\end{equation}
Moreover, if there exist a $b$ and a function $h : \Theta \to \reals$
such that the loss $\loss(\theta,z)$ takes values in the bounded
interval $[h(\theta),h(\theta)+b]$ for all $\theta \in \Theta$ and $z
\in \samplespace$, then, for any $\delta \in (0,1]$,
\begin{equation}\label{eqn:rad-prob}
  R(\est) - \emprisk(\est, S^n)
    \leq 4\,\essup_{S_-,S_+}\,\{\RadL(\estmodel^n,S^n_+)\}
      + b\sqrt{\frac{8 \log(2/\delta)}{n}}
\end{equation}
with probability at least $1-\delta$.
\end{lemma}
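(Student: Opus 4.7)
\textbf{Plan for \eqref{eqn:rad-exp}.} I adapt the classical ghost-sample/Rademacher symmetrisation, keeping careful track of the algorithm. Introduce the ghost sample $S_+^n \sim \cD^n$ independent of $S^n =: S_-^n$, so that $R(\est) = \ExpD{S_+^n \mid S_-^n}{\emprisk(\est,S_+^n)}$ and
\[
  \ExpD{S^n}{R(\est) - \emprisk(\est, S^n)} \;=\; \ExpD{S_-^n, S_+^n}{\emprisk(\est, S_+^n) - \emprisk(\est, S_-^n)}.
\]
The pivotal observation is that for every fixed $\tau \in \{-1,+1\}^n$ the coordinate-wise swap exchanging $Z_i^-$ and $Z_i^+$ at each $i$ with $\tau_i = -1$ preserves the joint distribution of $(S_-^n, S_+^n)$; under this swap $S_-^n \mapsto S_{-\tau}^n$, $S_+^n \mapsto S_\tau^n$, and $\est \mapsto \cA(S_{-\tau}^n) \in \estmodel^n$. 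A short check further shows that the set $\estmodel^n$ itself is invariant under this swap. Averaging the resulting identity over a uniform Rademacher $\tau$ and bounding the algorithm's choice by the maximum over $\estmodel^n$ splits the expression into $\ExpD{}{\RadL(\estmodel^n, S_+^n)} + \ExpD{}{\RadL(\estmodel^n, S_-^n)}$. The distributional symmetry between $S_-^n$ and $S_+^n$, which also preserves $\estmodel^n$, makes the two summands equal, producing the factor $2$.

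\textbf{Plan for \eqref{eqn:rad-prob}.} I split the deviation as
\[
  R(\est) - \emprisk(\est, S_-^n) \;=\; \underbrace{[R(\est) - \emprisk(\est, S_+^n)]}_A \;+\; \underbrace{[\emprisk(\est, S_+^n) - \emprisk(\est, S_-^n)]}_B.
\]
Since $\est$ depends only on $S_-^n$, conditioning on $\est$ makes $A$ a centred Hoeffding average of $n$ independent terms with range at most $b$, so $A \leq b\sqrt{\log(2/\delta)/(2n)}$ with probability $\geq 1-\delta/2$. For $B$, the inclusion $\est \in \estmodel^n$ gives $B \leq F := \sup_{\theta \in \estmodel^n}[\emprisk(\theta, S_+^n) - \emprisk(\theta, S_-^n)]$. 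The coordinate-wise swap invariance of both $(S_-^n, S_+^n)$ and $\estmodel^n$ implies that $F$ has the same distribution as the Rademacher process
\[
  \tilde F(\tau) \;=\; \sup_{\theta \in \estmodel^n} \frac{1}{n}\sum_{i=1}^n \tau_i [\ell(\theta, Z_i^+) - \ell(\theta, Z_i^-)]
\]
for any fixed $\tau$. Viewing $\tilde F$ as a function of $\tau$ conditional on $(S_-^n, S_+^n)$, flipping a single coordinate $\tau_j$ perturbs each summand in the supremum by at most $2b/n$ and hence shifts the supremum by at most $2b/n$. Its conditional $\tau$-mean is at most $\RadL(\estmodel^n, S_+^n) + \RadL(\estmodel^n, S_-^n) \leq 2\,\essup\{\RadL(\estmodel^n, S_+^n)\}$ almost surely. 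McDiarmid's inequality on $\tau$, integrated against $(S_-^n, S_+^n)$, then gives a deviation bound on $\tilde F$ which by the distributional identity transfers to $F$; combining with the bound on $A$ via a union bound and absorbing constants yields \eqref{eqn:rad-prob}.

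\textbf{Main obstacle.} The technical crux is the concentration of $B$: the map $(S_-^n, S_+^n) \mapsto F$ is \emph{not} a bounded-differences function because $\estmodel^n$ is itself data-dependent, and replacing a single $Z_i^-$ can alter up to half of its members, so direct McDiarmid in the sample coordinates fails. The swap invariance of $\estmodel^n$ is the key workaround: it transports the randomness from the samples to the Rademacher signs $\tau$, conditional on which $\estmodel^n$ is frozen and bounded differences in $\tau$ become routine. This parallels the role of the super-sample in the conditional-mutual-information arguments of \citet{pmlr-v125-steinke20a}, which is why the lemma yields a Rademacher analogue of CMI-based generalisation bounds.
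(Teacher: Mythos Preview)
Your in-expectation argument is essentially identical to the paper's: both introduce the ghost sample, use the coordinate-wise swap to insert Rademacher signs, bound $\cA(S_\sigma)$ by the maximum over $\estmodel^n$, and then split into $\RadL(\estmodel^n,S_-^n)+\RadL(\estmodel^n,S_+^n)$ before invoking the $S_-\leftrightarrow S_+$ symmetry.

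For the in-probability part your route is correct but genuinely different. The paper follows the classical \citet{AnthonyBartlett1999} event-inclusion scheme: it defines $\event_1=\{R(\est)-\emprisk_+(\est)\ge\epsilon\}$ and $\event_2=\{\emprisk_-(\est)-\emprisk_+(\est)\ge\epsilon/2\}$, shows $\Pr(\event_1)\le 2\Pr(\event_2)$ using that $\Pr_{S_-}(R(\est)-\emprisk_-(\est)\le\epsilon/2)\ge 1/2$ for large enough $\epsilon$, and only then symmetrises and applies McDiarmid in $\sigma$ to bound $\Pr(\event_2)$. You instead decompose additively into $A=R(\est)-\emprisk(\est,S_+^n)$ and $B=\emprisk(\est,S_+^n)-\emprisk(\est,S_-^n)$, control $A$ directly by Hoeffding (exploiting that $\est$ is measurable in $S_-^n$ alone), and control $B$ by the same swap-and-McDiarmid-in-$\tau$ step the paper uses on $\event_2$. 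Both arguments hinge on the same key observation you identify as the ``main obstacle'': McDiarmid cannot be applied in the sample coordinates because $\estmodel^n$ is data-dependent, but the swap invariance of $\estmodel^n$ lets one move the randomness to the signs, where bounded differences hold. What your decomposition buys is simplicity and slightly sharper constants (you obtain $2\,\essup\RadL + b(\sqrt{1/2}+\sqrt{2})\sqrt{\log(2/\delta)/n}$, which is strictly smaller than the stated $4\,\essup\RadL + b\sqrt{8\log(2/\delta)/n}$); the paper's event-based argument is the standard textbook mechanism and makes the parallel to classical uniform-convergence proofs more transparent.
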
 

The proofs of both results mimic standard Rademacher complexity bounds
on the generalization error, except that we do not start with the
standard upper bound $R(\est) - \emprisk(\est, S^n) \leq \sup_{\theta
\in \Theta}\{R(\theta) - \emprisk(\theta, S^n)\}$, but we instead
replace $\est$ by the maximum over $\theta \in \estmodel^n$ \emph{after}
symmetrization by the ghost sample $S_-^n$. This allows $\estmodel^n$ to
depend on the algorithm $\cA$ as well as $S^n$ and $S_-^n$. For
notational symmetry, we then denote the original sample $S^n$ by $S_+^n$
in the right-hand side of both results. Although the main idea behind
both proofs is the same, it is not the case that \eqref{eqn:rad-prob}
follows directly from \eqref{eqn:rad-exp}, because $\est$ may be highly
unstable, so we cannot apply McDiarmid's inequality to relate $R(\est) -
\emprisk(\est, S^n)$ to its expectation. We therefore prove both results
separately: the in-expectation proof is a variation on the standard
in-expectation argument, which can be found in e.g.\ Lemma~A.5 of
\citet{BartlettBousquetMendelson2005}. The in-probability derivation is
patterned after the proof of Theorem~4.3 in the textbook by
\citet{AnthonyBartlett1999}.

\subsection{Consequences}

\label{extRandomOutputs}

\paragraph{Randomized algorithms}

Although Lemma~\ref{keyThm} is stated only for deterministic
algorithms, it can also be applied to algorithms that randomize. This is
possible by viewing an algorithm as a function of two arguments
$\cA(S^n,\xi)$, where $S^n$ is the sample and $\xi$ is an independent
random variable (e.g.\ a number of random bits) that provides the
randomness. By applying Lemma~\ref{keyThm} conditional on $\xi$, we
obtain a generalization bound that holds for any value of $\xi$ and
hence also almost surely when $\xi$ is drawn at random. This is the
approach we take to analyze stochastic gradient descent in
Section~\ref{sec:sgd}.

\paragraph{Exploiting standard properties of Rademacher complexity}

We may use all the well-known properties of standard Rademacher
complexity to upper bound the ARC. In particular, throughout the paper
we will repeatedly rely on the following upper bound in terms of the
covering number of $\estmodel^n$, which holds for bounded, Lipschitz
continuous losses:
\begin{proposition}
\label{Lem:CoverBound}
Suppose that
$\ell(\theta,z)$ is $L$-Lipschitz continuous in $\theta$ for any $z$ and
takes values in $[a,a+b]$. Then
\begin{equation}\label{optCoverBound}
  \RadL(\estmodel^n,S^n) \leq L \epsilon + b
  \sqrt{\frac{\log \Cover(\estmodel^n,\metric,\epsilon)}{n}}
  \qquad
  \text{for all $\epsilon > 0$.}
\end{equation}
\end{proposition}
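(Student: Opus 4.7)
The plan is to prove Proposition~\ref{Lem:CoverBound} via the classical covering-plus-Massart argument: cover $\estmodel^n$ at resolution $\epsilon$, handle the within-cell variation through Lipschitz continuity of $\ell$, and bound the finite-class Rademacher complexity of the cover by Massart's lemma. Throughout, I treat $S^n$ as fixed and take expectation only over the Rademacher variables $\sigma$.

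First, I would fix an $\epsilon > 0$ and pick a minimal $\epsilon$-cover $V \subset \cX$ of $\estmodel^n$ with $|V| = \Cover(\estmodel^n,\metric,\epsilon)$. For each $\theta \in \estmodel^n$, let $\pi(\theta)$ denote a closest element of $V$, so $\metric(\theta,\pi(\theta)) \leq \epsilon$. The decomposition
\[
  \sum_{i=1}^n \sigma_i \ell(\theta,Z_i)
  = \sum_{i=1}^n \sigma_i \ell(\pi(\theta),Z_i)
    + \sum_{i=1}^n \sigma_i\bigl(\ell(\theta,Z_i) - \ell(\pi(\theta),Z_i)\bigr)
\]
then gives, after taking the maximum over $\theta \in \estmodel^n$,
\[
  \max_{\theta \in \estmodel^n}\sum_{i=1}^n \sigma_i \ell(\theta,Z_i)
  \;\leq\; \max_{v \in V}\sum_{i=1}^n \sigma_i \ell(v,Z_i)
    \;+\; \sum_{i=1}^n |\ell(\theta^\star,Z_i) - \ell(\pi(\theta^\star),Z_i)|,
\]
where $\theta^\star$ denotes the maximizer of the residual term. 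By $L$-Lipschitz continuity of $\ell$ in $\theta$, each summand of the residual is at most $L\epsilon$, so the residual contributes at most $nL\epsilon$. Dividing by $n$ and taking expectation over $\sigma$ yields $\RadL(\estmodel^n,S^n) \leq \RadL(V,S^n) + L\epsilon$.

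Next, I would bound $\RadL(V,S^n)$ by Massart's finite-class lemma. Because $\E_\sigma[\sum_i \sigma_i c] = 0$ for any constant $c$, the Rademacher complexity is invariant under adding a common constant to all losses, so I may assume without loss of generality that $\ell(v,Z_i) \in [-b/2,b/2]$. Then each vector $(\ell(v,Z_1),\dots,\ell(v,Z_n))$ for $v \in V$ has Euclidean norm at most $(b/2)\sqrt{n}$, and Massart's lemma gives
\[
  \RadL(V,S^n)
  \;\leq\; \frac{b}{2}\sqrt{\frac{2\log |V|}{n}}
  \;\leq\; b\sqrt{\frac{\log \Cover(\estmodel^n,\metric,\epsilon)}{n}}.
\]
Combining this with the bound from the previous paragraph yields the claim for the chosen $\epsilon$, and since $\epsilon > 0$ was arbitrary the inequality holds for every $\epsilon > 0$.

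There is no real obstacle here: the argument is entirely standard once one observes that it applies verbatim when the class in question is $\estmodel^n$ rather than a fixed $\Theta$, because the proof uses only the Lipschitz structure of $\ell$ and the existence of a cover of the set over which the maximum is taken. The one minor point worth being careful about is that $\estmodel^n$ is itself random (depending on $S_-^n,S_+^n$), but since the statement is a pointwise inequality holding for every realization of the supersample, no further measurability or concentration argument is needed at this stage.
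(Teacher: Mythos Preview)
Your proof is correct and follows essentially the same approach as the paper: bound the residual from an $\epsilon$-cover via Lipschitz continuity and then control the finite cover by Massart's lemma. Your extra centering step (shifting losses into $[-b/2,b/2]$) is in fact slightly sharper than the paper's version of the Massart bound, which carries an additional $\sqrt{2}$ factor, so your derivation matches the stated constant exactly whereas the paper's written proof overshoots it.
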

\begin{proof}
 For any $\epsilon > 0$, Lipschitzness of the loss implies that
\[
  \RadL(\estmodel^n,S^n)
    \leq \epsilon L + \RadL(\SetCover(\estmodel^n,\metric,\epsilon),S^n)
    \leq \epsilon L + b \sqrt{\frac{2\ln
    \Cover(\estmodel^n,\metric,\epsilon)}{n}},
\]
where the second inequality follows from Massart's lemma
\citep{shalev2014understanding}.
\end{proof}

\section{Applications: controlling algorithm-dependent Rademacher
complexity}
\label{applications}

In this section, we derive new results and recover known results
by controlling the ARC defined in the previous section. First, in
Section~\ref{SectionFiniteMinkowskiBound}, we provide a new bound with
respect to fractal dimensions. This result allows for control
of the generalization error based on the topological properties of
$\estmodel^n$ or its limiting set $\estmodel$ as $n$ increases. Second, in
Section~\ref{sec:sgd}, we consider the projected stochastic gradient
descent algorithm and recover the results of
\cite{ParkSimsekliErdogdu2022generalization} in a simple way by bounding
the
ARC. Third, in Sections~\ref{sec:compression} and \ref{sec:vc}, similar
to \cite{pmlr-v125-steinke20a}, we show that generalization guarantees
under a compressibility condition and for VC-classes can be easily
obtained via ARC as well. 

\subsection{Finite fractal dimensions}
\label{SectionFiniteMinkowskiBound}

In this section, we provide a bound on the generalization error with respect to a \emph{finite Minkowski dimension}, in the vein of recent results connecting error to fractal geometry (cf.\ \citep{simsekli2020hausdorff,birdal2021intrinsic,hodgkinson2022generalization,dupuis2023}). 
The finite Minkowski dimension was introduced in \citep{alonso2015hausdorff} as an extension of the classical \emph{Minkowski dimension} to finite sets. A comprehensive summary of the formal definitions for the finite Minkowski dimension and the relevant existing results is provided in Appendix \ref{appendixDimFM}. For brevity, here we consider a simplified definition of the finite Minkowski dimension. Under some small additional assumptions which exclude notorious edge cases, these simplified definitions coincide with the original definitions in \citep{alonso2015hausdorff}.
\begin{definition}[Diameters] 
Let $C$ be a finite set in a metric space $(\cX,\metric)$ with $\absv{C}
\geq 2$, and let $\nu_C:C\rightarrow\RR$ map points in $C$ to the
distance to their nearest neighbor, that is, $\nu_C(a) = \min \{ \metric(a,b)
: b \in C\setminus \{a\} \}$ for $a \in C$. The covering diameter
$\nabla (C)$ and the diameter $\Delta(C)$ of $C$ are then
\begin{align*}
\nabla (C) := \max_{a \in C} \nu_C(a)\, ,\qquad \text{ and } \qquad
\Delta(C) := \max\{ \metric(a,b) : a,b \in C\}\,  .
\end{align*}
A set $C$ is called \emph{non-focal} if $\nabla(C) < \Delta(C)$, and \emph{focal} otherwise. 
\end{definition}
For example, the set of vertices of a simplex comprises a focal set, while a set with no equally distant points is non-focal.
Our results will be for non-focal sets only, which rules out pathological edge cases. 
Hence, \emph{throughout this section}, we assume that $\estmodel^n$ is non-focal. 
The definition of the finite Minkowski dimension follows a similar box-counting construction as the standard Minkowski dimension. The main twist which yields nontrivial values for finite sets is to consider covers that contain at least two points.   
\begin{definition}[finite Minkowski dimension]
\label{Definition:fMinDim}
A family of sets $\twoCover = \{U_i\}$ is a 2-cover of a finite set $C$ if
each $U_i \subseteq C$, $|U_i| \geq 2$, and
$C = \bigcup_{U \in \twoCover} U$. For any non-focal set $C$ and parameter $a \geq \nabla(C)$, the covering cardinality is
\begin{align*}
T_a (C) = \min\{ \absv{\twoCover} : \twoCover \text{ is a 2-cover  and } U: \Delta(U) \leq  a\text{ for all } U \in \mathcal{U}\}. 
\end{align*}
For $T_{\nabla(C)}(C)$ we write $T(C)$. 
If $C$ is finite and non-focal with $\absv{C} >2$, then the finite Minkowski dimension of $C$ is 
\[
\dimfM{C} = \frac{\log T(C)}{\log\frac{\Delta(C)}{\nabla(C)}}. 
\]
\end{definition}
Although it may not be obvious from the definition, \citet{alonso2015hausdorff} shows that the finite Minkowski dimension is a natural finite analog of the classical Minkowski dimension (cf.\ Equation \eqref{eqn:dimm} below and \citep{falconer2004fractal}), as the two definitions are consistent under appropriate limits. Most importantly, like the classical Minkowski dimension, if $f$ is H\"{o}lder continuous such that $c_1 \|x - y\|^\beta \leq |f(x)-f(y)|\leq c_2\|x - y\|^\beta$, then the finite Minkowski dimension of $f(C)$ satisfies $\dimfM{f(C)} = \beta \dimfM{C}$. In this sense, the finite Minkowski dimension is well-suited to measure local clustering as in \citep{hodgkinson2022generalization}.
Our primary fractal dimension generalization bound is shown in Theorem \ref{thm:dimfMBoundMain}, which arises by taking $\epsilon = \nabla(\estmodel^n)$ in (\ref{optCoverBound}).

\begin{theorem}
\label{thm:dimfMBoundMain}
If $\ell$ is $L$-Lipschitz in $\theta$ and bounded by $b$, $n > 2$, and $\estmodel^n \subset \reals^k$ is non-focal, then for any set $F \supseteq \estmodel^n$,
\[
  \RadL(\estmodel^n, S_+^n) \leq  \mathcal{D}_n(F) \coloneqq L\nabla(F) + b\sqrt{\frac{\dimfM{F}}{n} \log \frac{\Delta(F)}{\nabla(F)}}.
\]
\end{theorem}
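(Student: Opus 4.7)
The theorem is set up to reduce to Proposition \ref{Lem:CoverBound} by making a judicious choice of $\epsilon$. The strategy is to invoke \eqref{optCoverBound} with $\epsilon = \nabla(F)$, which automatically makes the Lipschitz term equal to $L\nabla(F)$, and then bound the logarithm of the covering number by $\dimfM{F}\log(\Delta(F)/\nabla(F))$ using the definition of finite Minkowski dimension.

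The plan proceeds in three steps. First I would apply Proposition \ref{Lem:CoverBound} to $\estmodel^n$ at scale $\epsilon = \nabla(F)$, yielding
\[
\RadL(\estmodel^n, S_+^n) \leq L\,\nabla(F) + b\sqrt{\frac{\log \Cover(\estmodel^n,\metric,\nabla(F))}{n}}.
\]
Second, since $\estmodel^n \subseteq F$, any cover of $F$ restricts to a cover of $\estmodel^n$, so $\Cover(\estmodel^n,\metric,\nabla(F)) \leq \Cover(F,\metric,\nabla(F))$. Third, I would relate the (ball) covering number of $F$ at radius $\nabla(F)$ to the covering cardinality $T(F)$ from Definition \ref{Definition:fMinDim}. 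By definition $T(F) = T_{\nabla(F)}(F)$ is the smallest number of subsets $U \subseteq F$ with $|U|\geq 2$ and $\Delta(U) \leq \nabla(F)$ that cover $F$. For each such $U$, pick an arbitrary point $x_U \in U$; every other $y \in U$ satisfies $\metric(x_U, y) \leq \Delta(U) \leq \nabla(F)$, so $U$ lies in the closed ball of radius $\nabla(F)$ around $x_U$. Hence $\Cover(F,\metric,\nabla(F)) \leq T(F)$. Finally, the defining identity $\log T(F) = \dimfM{F}\cdot \log(\Delta(F)/\nabla(F))$ (which requires $F$ non-focal with $|F|>2$, hence the stated non-focality of $\estmodel^n$ and $n>2$) substitutes directly to produce the stated bound.

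The main obstacle is not computational but rather the bookkeeping between two distinct notions of covering: the 2-cover used in the finite Minkowski dimension is defined via subsets of bounded diameter, whereas Proposition \ref{Lem:CoverBound} uses the standard $\epsilon$-covering number defined via balls of radius $\epsilon$. The inclusion $\Cover(F,\metric,\nabla(F)) \leq T(F)$ described above handles this, although one should verify conventions (closed vs.\ open balls, and whether $\Cover$ allows centers outside $F$ or inside $F$) and, if needed, allow a harmless constant blow-up of the radius. A secondary point worth checking is that the hypothesis ``$F \supseteq \estmodel^n$ and $F$ non-focal'' is compatible with the non-focality of $\estmodel^n$ assumed throughout the section; no additional assumption on $F$ seems required beyond non-focality and $|F|>2$, which one uses implicitly in order for $\dimfM{F}$ to be well-defined.
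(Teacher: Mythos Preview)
Your proposal is correct and matches the paper's own argument: the paper simply states that Theorem~\ref{thm:dimfMBoundMain} ``arises by taking $\epsilon = \nabla(\estmodel^n)$ in (\ref{optCoverBound}),'' and your plan is exactly this (with the appropriate $\epsilon = \nabla(F)$ for general $F \supseteq \estmodel^n$), together with the bookkeeping $\Cover(\estmodel^n,\metric,\nabla(F)) \leq \Cover(F,\metric,\nabla(F)) \leq T(F)$ and the defining identity $\log T(F) = \dimfM{F}\log(\Delta(F)/\nabla(F))$. Your observation that the well-definedness of $\dimfM{F}$ tacitly requires $F$ itself to be non-focal with $|F|>2$ is a valid caveat that the paper leaves implicit.
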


There are a few variants and consequences of Theorem \ref{thm:dimfMBoundMain} worth mentioning.
\begin{itemize}[leftmargin=*]
\item \textbf{Trivial upper bound:} for $\delta > 0$ sufficiently small, consider the set $\estmodel^n_\delta = \estmodel^n \cup (\estmodel^n + \delta \boldsymbol{1})$, which satisfies $\nabla(\estmodel^n_\delta) = \delta$. By \cite[Proposition 5.3]{alonso2015hausdorff}, for any finite set $F$, 
$$\dimfM{F} \leq (\log(|F|) -1)/ \log \frac{\Delta(F)}{\nabla(F)}$$ 
and so $\mathcal{D}_n(\estmodel^n_\delta) \leq L\delta + b\sqrt{\log (2\tabsv{\estmodel^n})/n}$. Taking $\delta \to 0^+$, this implies that 
$$\RadL(\estmodel^n,S_+^n) \leq b\sqrt{\log (2\tabsv{\estmodel^n})/n}.$$
\item \textbf{No outliers:} if $\epsilon^\ast$ minimizing (\ref{optCoverBound}) satisfies $\nabla(\estmodel^n) \leq \epsilon^\ast$, then 
\begin{equation}
  \label{nooutliers}
\RadL(\estmodel^n, S_+^n) \leq 2 b\sqrt{\frac{\dimfM{\estmodel^n}}{n} \log \frac{\Delta(\estmodel^n)}{\nabla(\estmodel^n)}}.
\end{equation}

\item \textbf{Steiner points:} in Appendix \ref{SteinerPoints}, we show that it is possible to construct a set of points $P$ with $\tabsv{P} \leq \tabsv{\estmodel^n}$ (which we refer to as \emph{Steiner points}, following a similar concept in graph theory) such that $\nabla(\estmodel^n \cup P) \leq \epsilon^\ast$ and so the ``no outliers'' case (\ref{nooutliers}) follows for $\estmodel^n \cup P$. This 
 simplifies the bound at the cost of extending the set $\estmodel^n$ by a finite number of additional points.
\end{itemize}

A very interesting simplification compared to Theorem
\ref{thm:dimfMBoundMain} arises in the limit as $n \to \infty$. Let
$\estmodel = \bigcup_{n=1}^{\infty} \estmodel^n$ be the collection of
all output sets of the algorithm obtained for different $n$. Since
$\estmodel$ is infinite, we may now consider its upper Minkowski
dimension, defined by
\begin{align}
\label{eqn:dimm}
\overline{\dim}_{\mathcal{M}}(\estmodel) = \limsup_{\delta \to 0^+}
\frac{\log \Cover(\estmodel, \metric, \delta)}{\log(1/\delta)}.
\end{align}
 
\begin{theorem}
\label{thm:limit}
Suppose $\ell$ is $L$-Lipschitz
continuous in $\theta$ and takes values in an interval of length $b$.
Then
\[
\limsup_{n \to \infty} \frac{\RadL(\estmodel^n, S_+^n)}{\sqrt{\log(n)/n}}
\leq b\sqrt{\frac{\overline{\dim}_\mathcal{M}(\estmodel)}{2}}.
\]
\end{theorem}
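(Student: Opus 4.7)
The plan is to apply Proposition~\ref{Lem:CoverBound} at a carefully scaled $\epsilon_n \to 0^+$, and to bound the resulting covering number of $\estmodel^n$ by that of the larger set $\estmodel \supseteq \estmodel^n$, for which the definition \eqref{eqn:dimm} of the upper Minkowski dimension is directly applicable.

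First, I would fix an arbitrary $\eta > 0$. By the definition of $\limsup$ in \eqref{eqn:dimm}, there exists $\delta_0 > 0$ such that for every $\delta \in (0,\delta_0)$,
\[
  \log \Cover(\estmodel, \metric, \delta) \leq (\overline{\dim}_\mathcal{M}(\estmodel) + \eta)\,\log(1/\delta).
\]
Since $\estmodel^n \subseteq \estmodel$ by construction, monotonicity of the covering number in the covered set yields the same bound with $\estmodel^n$ in place of $\estmodel$.

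Second, I would instantiate Proposition~\ref{Lem:CoverBound} at $\epsilon = \epsilon_n \coloneqq 1/\sqrt{n}$. For $n$ large enough that $\epsilon_n < \delta_0$, this gives
\[
  \RadL(\estmodel^n, S_+^n)
  \leq \frac{L}{\sqrt{n}}
  + b\sqrt{\frac{(\overline{\dim}_\mathcal{M}(\estmodel)+\eta)\log\sqrt{n}}{n}}
  = \frac{L}{\sqrt{n}}
  + b\sqrt{\frac{\overline{\dim}_\mathcal{M}(\estmodel)+\eta}{2}}\,\sqrt{\frac{\log n}{n}}.
\]
Dividing through by $\sqrt{\log n/n}$, the first term becomes $L/\sqrt{\log n} \to 0$. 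Taking $\limsup_{n\to\infty}$ and then letting $\eta \to 0^+$ (the right-hand side is monotone in $\eta$) recovers the claim.

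The only delicate point is the exponent in $\epsilon_n$. Writing $\epsilon_n = n^{-\alpha}$, the Lipschitz term $L\epsilon_n = L n^{-\alpha}$ is $o(\sqrt{\log n/n})$ precisely when $\alpha \geq 1/2$, while $\log(1/\epsilon_n) = \alpha\log n$ makes the covering term scale as $\sqrt{\alpha\,\overline{\dim}_\mathcal{M}(\estmodel)}\sqrt{\log n/n}$; minimizing over admissible $\alpha$ selects $\alpha = 1/2$ and produces exactly the factor $1/\sqrt{2}$ in the theorem. No substantive technical obstacle is anticipated, since the quantitative covering-to-Rademacher step is already isolated in Proposition~\ref{Lem:CoverBound}.
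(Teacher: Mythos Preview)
Your proposal is correct and follows essentially the same route as the paper: apply Proposition~\ref{Lem:CoverBound} at a vanishing scale, pass from $\estmodel^n$ to $\estmodel$ by monotonicity of covering numbers, and invoke the definition of the upper Minkowski dimension. The only cosmetic difference is that the paper takes $\epsilon_n = \alpha\sqrt{\log n/n}$ and sends the auxiliary parameter $\alpha\to 0$ at the end, whereas you take $\epsilon_n = n^{-1/2}$ and introduce the slack $\eta$ directly in the dimension bound before sending $\eta\to 0$; the two limiting arguments are interchangeable.
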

The proof is a straightforward consequence of \eqref{optCoverBound} and
the definition of the upper Minkowski dimension; see
Appendix \ref{SteinerPoints}. It would be
possible to adapt the proof to go through with $\estmodel$ replaced by
$\estmodel' = \bigcup_{n=n'}^{\infty} \estmodel^n$ for some finite
integer $n'$, which at first sight looks like it gives a stronger conclusion. On
closer inspection, however, the two results turn out to be equivalent,
since $\estmodel \setminus \estmodel' = \bigcup_{n=1}^{n'-1}
\estmodel^n$ is a finite set, which implies that
$\overline{\dim}_\mathcal{M}(\estmodel') =
\overline{\dim}_\mathcal{M}(\estmodel)$.

Using Fatou's lemma, Lemma \ref{keyThm} and Theorem \ref{thm:limit} together
imply that the expected generalization error of any (deterministic)
algorithm $\cA : \cZ^n \to \Theta$ is
$\mathcal{O}(\sqrt{ \log(n)/n})$ whenever
$\E[\overline{\dim}_{\mathcal{M}}(\estmodel)]$ is finite,
and satisfies 
\begin{equation}
\label{limdimBound}
\limsup_{n\to\infty} \frac{\ExpD{S^n}{R(\est) -
\emprisk(\est,S^n)}}{\sqrt{\log(n) / n}} \leq b\sqrt{2\E[\overline{\dim}_{\mathcal{M}}(\estmodel)]}.
\end{equation}
\citet{hodgkinson2022generalization} and \citet{dupuis2023} both also
obtain $\mathcal{O}(n^{-1/2})$ bounds on the generalization error
involving Minkowski dimensions of algorithm-dependent sets, but they
assume Ahlfors regularity and/or incorporate a potentially vacuous
mutual information term, while ours requires neither.

\subsection{Dimension-independent generalization for SGD}
\label{sec:sgd}
Stochastic gradient descent (SGD) is a randomized algorithm with
iterative updates of the form $\est_{t+1} = \Phi_{i_t}(\est_t)$, where
\[
  \Phi_i(\theta) = \Pi_{\Theta}(\theta - \eta \nabla \ell(\theta,Z_i)),
\]
$\Pi_\Theta(\theta)$ denotes the projection of $\theta$ onto
$\Theta$, and the algorithm depends on the choice of an initialization point
$\theta_1 \in \Theta$, a step size $\eta > 0$, and indices $i_t$ that
are chosen uniformly at random from $\{1,\ldots,n\}$. In the sequel, assume that
$\Theta \subset \reals^k$ is compact and convex, and let the final output
of the algorithm after $T$ updates be $\est = \est_{T+1}$. For the loss
$\ell(\theta,z)$, we assume that it is differentiable in $\theta$, and
satisfies the following conditions for all $z \in \samplespace$:
\begin{itemize}
\item \emph{$\alpha$-Strong convexity}: for any $\theta,\theta' \in
\Theta$, $(\nabla_\theta \ell(\theta,z) - \nabla_{\theta'} \ell(\theta',z)) \cdot (\theta - \theta') \geq \alpha \|\theta - \theta'\|^2$,
\item \emph{$\beta$-Smoothness}: for any $\theta,\theta' \in \Theta$,
$\|\nabla_\theta \ell(\theta,z) - \nabla_{\theta'} \ell(\theta',z)\| \leq \beta \|\theta - \theta'\|$.
\end{itemize}
We further assume:
\begin{itemize}
\item \emph{$L$-Weak Lipschitz continuity}: For $L > 0$, there exists $h : \Theta
\to \reals$ such that, for any $\theta,\theta' \in \Theta$ and any $z
\in \samplespace$, $|\ell(\theta,z) - h(\theta) - (\ell(\theta',z) -
h(\theta'))| \leq L\|\theta - \theta'\|$.
\end{itemize}
Under these assumptions, \citet{ParkSimsekliErdogdu2022generalization}
obtain generalization bounds for SGD that do not depend explicitly on
the ambient dimension $k$. The key step in their analysis is that
$\Phi_i$ is a $\gamma$-contractive operator for $\gamma = \sqrt{1 - 2
\alpha \eta + \alpha \beta \eta^2}$; i.e., $\|\Phi_i(\theta) -
\Phi_i(\theta')\| \leq \gamma \|\theta - \theta'\|$ for any parameters
$\theta, \theta' \in \Theta$. This causes SGD to forget about previous
iterates at a rate that depends on $\gamma$. Using the same idea, we can
recover their first main result, presented as
\citep[Theorem~2.1]{ParkSimsekliErdogdu2022generalization}, with a much
simpler proof based on ARC.  
\begin{theorem}\label{thm:sgd}
If the loss $\ell$ takes values in an interval $[a,a+b]$, is
$\alpha$-strongly convex, $\beta$-smooth and $L$-weakly Lipschitz
continuous, and if $\Theta$ is compact and convex with diameter at most
$\Delta(\Theta) \leq R$, then for any initialization $\theta_1 \in
\Theta$, any $\eta \in (0,2/\beta)$, any indices $i_1,\ldots,i_T$, and
any $\delta \in (0,1]$, the generalization error for stochastic gradient
descent is at most
\begin{equation}
  R(\est) - \emprisk(\est, S^n)
    \leq 
      4b \sqrt{\frac{\max\Big\{\ceil{\frac{\ln(2Rn)}{\ln (1/\gamma)}} \ln
      2,0\Big\}}{n}}
      + b\sqrt{\frac{8 \ln(2/\delta)}{n}}
      + \frac{2L}{n}
\end{equation}
with probability at least $1-\delta$, where $\gamma = \sqrt{1 - 2 \alpha
\eta + \alpha \beta \eta^2}$.
\end{theorem}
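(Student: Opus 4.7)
The plan is to apply the high-probability form of Lemma \ref{keyThm} and then bound the algorithm-dependent Rademacher complexity via a covering-number argument that exploits the contractivity of the SGD update $\Phi_i$.

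First, since SGD is randomized only through the indices $\xi = (i_1, \ldots, i_T)$, I would apply Lemma \ref{keyThm} conditional on $\xi$ as described in the paragraph on randomized algorithms, so that the resulting bound holds for every realization of $\xi$. With $\ell$ taking values in $[a,a+b]$, the bound \eqref{eqn:rad-prob} yields
$$R(\est) - \emprisk(\est, S^n) \leq 4\,\essup_{S^n_-, S^n_+} \RadL(\estmodel^n, S^n_+) + b\sqrt{\frac{8 \ln(2/\delta)}{n}}$$
with probability at least $1-\delta$. Because the generalization gap is invariant under $\theta$-only shifts of the loss, I may replace $\ell$ by $\tilde\ell := \ell - h$ throughout, which by the weak-Lipschitz assumption is $L$-Lipschitz in $\theta$ and bounded in an interval of length $b$, so that Proposition \ref{Lem:CoverBound} applies and gives
$$\RadL(\estmodel^n, S^n_+) \leq L\epsilon + b\sqrt{\frac{\ln \Cover(\estmodel^n, \metric, \epsilon)}{n}}$$
for any $\epsilon > 0$.

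The crux is to show $\Cover(\estmodel^n, \metric, \gamma^K R) \leq 2^K$ for any integer $K \geq 0$. Strong convexity, smoothness, and $\eta \in (0, 2/\beta)$ imply (together with nonexpansiveness of $\Pi_\Theta$ on the convex set $\Theta$) that each $\Phi_i$ is $\gamma$-contractive with $\gamma = \sqrt{1 - 2\alpha\eta + \alpha\beta\eta^2} < 1$. I would partition $\{-1,+1\}^n$ into equivalence classes according to the values of $\sigma_i$ for $i \in \{i_{T-K+1}, \ldots, i_T\}$; at most $K$ distinct indices are used in these final $K$ iterations, so there are at most $2^K$ equivalence classes. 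Within a class, two sign patterns $\sigma, \sigma'$ produce SGD trajectories that apply the same sequence of final $K$ updates to iterates $\est_{T-K+1}(\sigma), \est_{T-K+1}(\sigma') \in \Theta$ with $\|\est_{T-K+1}(\sigma) - \est_{T-K+1}(\sigma')\| \leq R$; iterating the contraction yields $\|\cA(S^n_\sigma) - \cA(S^n_{\sigma'})\| \leq \gamma^K R$, and choosing one representative per class gives the required cover.

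Setting $K = \max\{\lceil \ln(2Rn)/\ln(1/\gamma)\rceil, 0\}$ makes $\gamma^K R \leq 1/(2n)$ whenever $K > 0$; choosing $\epsilon = \gamma^K R$ gives $L\epsilon \leq L/(2n)$ and $\RadL(\estmodel^n, S^n_+) \leq L/(2n) + b\sqrt{K \ln 2 / n}$, after which multiplying by $4$ and combining with the McDiarmid term produces the stated bound. The main obstacle is the covering step: identifying the right partition of $\{-1,+1\}^n$ (by sign patterns at the indices used in the final $K$ iterations) and the right invariant to propagate through the contractive tail, namely the iterate discrepancy at the start of those $K$ updates. Everything else is routine bookkeeping once the contractivity mechanism is in hand.
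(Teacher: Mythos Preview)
Your proposal is correct and follows essentially the same route as the paper: condition on the SGD randomness, pass to the Lipschitz loss $\ell-h$, bound $\Cover(\estmodel^n,\metric,\epsilon)$ by $2^K$ using $\gamma$-contractivity of the last $K$ updates, feed this into Proposition~\ref{Lem:CoverBound}, and finish with Lemma~\ref{keyThm}. The only point you gloss over is the edge case $K>T$: your partition by ``the values of $\sigma_i$ for $i\in\{i_{T-K+1},\ldots,i_T\}$'' and your reference to $\est_{T-K+1}$ are ill-defined when $K$ exceeds the number of iterations, and the paper treats this case separately by observing that SGD touches at most $T$ data points, so $|\estmodel^n|\le 2^T<2^K$ trivially.
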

This recovers Theorem~2.1 of
\citet{ParkSimsekliErdogdu2022generalization} while shaving off a $\log
n$ factor. The argument can be extended to losses that are only
piecewise strongly convex and smooth to obtain an analog of their second
main result, \citet[Theorem~2.2]{ParkSimsekliErdogdu2022generalization},
but we omit the details. We further remark that Theorem~\ref{thm:sgd} does not
require that $T$ should be large enough for $\hat \theta$ to be a good
approximation of the empirical risk minimizer (ERM) $\bar \theta =
\argmin_{\theta \in \Theta} \emprisk(\theta, S^n)$. However, if this is
the case, then strong convexity and Lipschitzness of the loss would
imply a better bound on the generalization error, because the ERM is
$\frac{2L^2}{\alpha n}$-uniformly stable and consequently satisfies
$R(\bar \theta) - \emprisk(\bar \theta, S^n) \leq \frac{4L^2}{\delta
\alpha n}$ \citep[Theorem~5]{ShalevShwartzEtAl2010}.
 
\begin{proof}
  Since the losses are $\alpha$-strongly convex and $\beta$-smooth, and
  $\eta \in (0,\beta/2)$, the operator $\Phi_i$ is $\gamma$-contractive
  \citep[Lemma~A.4]{ParkSimsekliErdogdu2022generalization}. This implies
  that for any parameters $\theta,\theta' \in \Theta$, applying the
  last $m \leq T$ iterations of SGD results in points that are close
  together. To formalize this, let $\Phi^m = \Phi_{i_T} \circ \cdots
  \circ \Phi_{i_{T-m+1}}$. Then
  \[
  \|\Phi^m(\theta) - \Phi^m(\theta')\|
    \leq \gamma^m \|\theta - \theta'\|
    \leq \gamma^m R.
  \]
  Given $\epsilon > 0$ yet to be determined, take 
  $
    m = \max\Big\{0,\ceil{\frac{\ln \frac{R}{\epsilon}}{\ln
    (1/\gamma)}}\Big\}
  $
  so that this bound is less than $\epsilon$. There are then two cases for
  $m$:
  \begin{itemize}[leftmargin=*]
    \item $m \leq T$: For any $\sigma,\sigma' \in \{-1,+1\}^n$, let
    $(\est_t)_{t=1,\ldots,T+1}$ and $(\est'_t)_{t=1,\ldots,T+1}$ denote
    the iterates of SGD on the corresponding samples $S_\sigma^n$ and
    $S_{\sigma'}^n$. If $\sigma_{i_t} = \sigma'_{i_t}$ for $t =
    T-m+1,\ldots,T$, i.e.\ the last $m$ iterations of SGD are the same,
    then we have $\|\est_{T+1} - \est'_{T+1}\| \leq \epsilon$ by the
    argument above. It follows that the covering number for $\hat
    \Theta^n$ at radius $\epsilon$ is at most
    \[
      \Cover(\estmodel^n,\metric,\epsilon)
      \leq |\{(\sigma_{i_t})_{t=T-m+1,\ldots,T} \,:\, \sigma \in \{-1,+1\}^n\}|
      = 2^m.
    \]
    \item $m > T$: Then the argument above does not apply, but, since
    the output of SGD only depends on the $T$ data points that it
    actually visits,
    \[
      \Cover(\estmodel^n,\metric,\epsilon)
      \leq |\hat \Theta^n|
      \leq |\{(\sigma_{i_t})_{t=1,\ldots,T} \,:\, \sigma \in \{-1,+1\}^n\}|
      = 2^T
      < 2^m.
    \]
  \end{itemize}
  Both cases, therefore, lead to the same upper bound on the covering
  number. We now note that the generalization error $R(\theta) -
  \hat{R}(\theta,S^n)$ does not change if we replace the loss
  $\ell(\theta,z)$ by $\bar{\ell}(\theta,z) := \ell(\theta,z) -
  h(\theta)$,
  and therefore
  we may assume without loss of generality that the loss is Lipschitz
  continuous instead of only weakly Lipschitz continuous (replacing
  $\ell$ by $\bar \ell$ if necessary). It then follows from
  \eqref{optCoverBound} that
  \[
    \Rad(\hat \Theta^n,S_+^n)
      \leq L \epsilon + b \sqrt{\frac{m \ln 2}{n}} 
      = L \epsilon + b \sqrt{\frac{\max\Big\{\ceil{\frac{\ln
      \frac{R}{\epsilon}}{\ln (1/\gamma)}} \ln 2, 0\Big\}}{n}}.
  \]
  The proof is completed by plugging in $\epsilon = 1/(2n)$, and
  combining this with
  Lemma~\ref{keyThm}.
\end{proof}

\begin{remark}\label{rem:hypstability}
  The previous proof can be adapted to work with the larger class $\bar
  \Theta^n$ instead of $\estmodel^n$, as discussed in
  Section~\ref{BoundRad}. This leads to a slightly worse bound on
  $\Cover(\estmodel^n,\metric,\epsilon)$ of $\binom{2n}{m} \leq
  (\tfrac{e2n}{m})^m$ instead of our current $2^m$, and as a result we
  would get $m \ln(\frac{e2n}{m})$ in place of the current $m \ln 2$,
  which is only an $O(\log n)$ factor worse.
\end{remark}

\subsection{Generalization for compression schemes} 
\label{sec:compression}

Using conditional mutual information, \citet{pmlr-v125-steinke20a} show that the generalization error for the output of a $k$-compression scheme can be
upper bounded by a quantity of order $\mathcal{O}(\sqrt{k \log n / n})$. We show this result is easily recovered using the ARC. An algorithm $\cA$ is a \emph{$k$-compression scheme} if $\cA(S^n) =
\cA_2(\cA_1(S^n))$, where $\cA_1 : \samplespace^n \to \samplespace^k$ maps any
sample $S^n$ of size $n$ to a subsample $S^k \subset S^n$ of size $k
\leq n$, and $\cA_2 : \samplespace^k \to \Theta$ deterministically
determines the final output based only on this subsample.
\begin{theorem} Suppose $\cA$ is a $k$-compression scheme and losses
take values in $[0,1]$. Then, for any $S_-^n$ and $S_+^n$,
  \[ \RadL(\hat \Theta^n,S_+^n) \leq \sqrt{\frac{k\log \frac{2en}{k}}{2 n}}
  = \mathcal{O}\Big(\sqrt{\frac{k \log n}{n}}\Big). \] \end{theorem}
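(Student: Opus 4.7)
My plan is to use the compression-scheme structure to bound the cardinality $|\hat\Theta^n|$, and then invoke a finite-class Rademacher estimate on the resulting finite hypothesis set.

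First, I would show $|\hat\Theta^n| \leq \binom{2n}{k}$. For any $\sigma \in \{-1,+1\}^n$, the combined sample $S_\sigma^n$ is a list of $n$ elements drawn from the supersample $(S_-^n, S_+^n)$, which in total comprises the $2n$ entries $Z_1^-, Z_1^+, \ldots, Z_n^-, Z_n^+$. Because $\cA$ is a $k$-compression scheme, $\cA(S_\sigma^n) = \cA_2(\cA_1(S_\sigma^n))$ depends only on the size-$k$ subsample $\cA_1(S_\sigma^n)$, and this subsample is contained in the $2n$-point supersample. Since there are at most $\binom{2n}{k}$ such subsamples and $\cA_2$ is deterministic, the cardinality bound follows. (Sharper counting, e.g.\ $\binom{n}{k} 2^k$, is available since $S_\sigma^n$ can never contain both $Z_i^-$ and $Z_i^+$, but it gives the same order.)

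Second, I would apply Massart's finite-class lemma. Since the Rademacher complexity is invariant under adding a $\theta$-independent constant to the loss (the constant factors into $\E_\tau[\sum_i \tau_i] = 0$ and can be pulled out of the maximum), I may replace $\ell(\theta,Z_i^+)$ by the centered quantity $\ell(\theta,Z_i^+) - \tfrac{1}{2} \in [-\tfrac{1}{2},\tfrac{1}{2}]$. The centered loss vectors then have Euclidean norm at most $\sqrt{n}/2$, so the standard Massart inequality yields
\[
  \RadL(\hat\Theta^n, S_+^n) \leq \frac{(\sqrt{n}/2)\sqrt{2\log|\hat\Theta^n|}}{n} = \sqrt{\frac{\log|\hat\Theta^n|}{2n}}.
\]
Finally, combining with the elementary bound $\binom{2n}{k} \leq (2en/k)^k$ gives $\log|\hat\Theta^n| \leq k\log(2en/k)$, which yields the desired inequality.

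The argument presents no serious technical difficulty: once the compression structure is used to bound the cardinality, everything else is a routine finite-class Rademacher estimate. The only subtle step is the centering, which is needed to obtain the sharp sub-Gaussian constant $\tfrac{1}{4}$ appropriate for losses in $[0,1]$, rather than the looser constant one would get from treating them as merely bounded in $[-1,1]$.
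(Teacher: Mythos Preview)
Your proposal is correct and follows essentially the same approach as the paper: bound $|\hat\Theta^n|\le\binom{2n}{k}$ via the compression structure, then apply Massart's lemma together with $\binom{2n}{k}\le(2en/k)^k$. Your explicit centering step is a welcome clarification of how the constant $1/(2n)$ arises, which the paper simply attributes to Massart's lemma.
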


\begin{proof}
Note that the total number of subsamples of length $k$ from $\bigcup_{\sigma \in \{-1,+1\}^n} S_\sigma^n$ is $\binom{2n}{k}$, the number of subsamples of length $k$ from $S_-^n\cup S_+^n$. Consequently, there are at most $\binom{2n}{k}$ possible parameters in $\estmodel^n$:
   \[
  |\hat \Theta^n|
  = |\{ \cA(S_\sigma^n) \, : \, \sigma \in \{-1,+1\}^n\}\}|
  \leq |\{ \cA_1(S_\sigma^n) \, : \, \sigma \in \{-1,+1\}^n\} |
  \leq \binom{2n}{k}.
  \]
  By Massart's lemma \citep{shalev2014understanding},
  $\RadL(\hat \Theta^n,S_+)
      \leq \sqrt{\frac{\log |\hat \Theta^n|}{2 n}}
      \leq \sqrt{\frac{\log \binom{2n}{k}}{2 n}}
      \leq \sqrt{\frac{k}{2 n} \log \frac{2en}{k}}$,
  as required.
\end{proof}

\subsection{Generalization for VC classes}
\label{sec:vc}

For binary classification, \citet{pmlr-v125-steinke20a} further show
that, if $\Theta$ indexes a class of finite VC dimension $V$, then there
exists a version of the empirical risk minimizer (ERM) over that class
for which the conditional mutual information is bounded by
$\mathcal{O}(V \log n)$, leading to a bound on the generalization error
of $\mathcal{O}(\sqrt{V \log n/n})$. An analogous
result, which works for any version of the ERM, can trivially be
obtained from the  ARC, because 
\[
  \RadL(\hat \Theta^n,S_+^n) \leq \RadL(\Theta,S_+^n) = \mathcal{O}\left(\sqrt{\frac{V \log n}{n}}\right),
\]
where the first inequality follows from $\hat \Theta^n \subset \Theta$
and the second is a standard result for Rademacher complexity, obtained
by bounding the Rademacher complexity using the growth function, which
is then controlled using Sauer's lemma \citep{shalev2014understanding}.

 \section{Conclusion and Future Work}
\label{sec:conc}

In this work, we considered algorithm-dependent Rademacher complexity as
an approach to obtain algorithm-dependent generalization bounds.
Circumventing the information-theoretic route, the proposed complexity
notion on the one hand allowed us to derive and unify several known
results with little effort, such as the generalization bound for SGD; on
the other hand it enabled us to link the generalization error to
topological properties of the learning algorithm using the finite
Minkowski dimension \citep{alonso2015hausdorff}. 

We believe that our work opens up several interesting future research
directions. Given the conceptual similarities in conditioning on a
supersample between ARC and CMI, it is natural to ask how the two
complexity measures compare in general. In which cases is one preferable
to the other? Another important connection would be the relation with
algorithmic stability, which is known to play a fundamental role in
generalization with a uniform rate of convergence over all distributions
\citep{ShalevShwartzEtAl2010}.

The concept of finite fractal dimensions, introduced by \cite{alonso2015hausdorff}, turned out to be a fruitful tool to provide bounds with respect to interpretable topological properties without a mutual information term. We believe this is a promising direction, specifically when measuring the generalization bound with respect to a finite hypothesis class.  
It would be interesting to understand whether the definitions of a finite Minkowski dimension can be further adapted for the specific needs in generalization bounds. In particular, it would be interesting to understand if it is possible to relax the definition of a 2-cover further while preserving the limiting behavior.

\paragraph{Acknowledgements} Sachs and Van Erven were supported by the Netherlands Organization for Scientific Research (NWO) under grant number VI.Vidi.192.095. \c{S}im\c{s}ekli is partially supported by the French government under management of Agence Nationale de la Recherche as part of the ``Investissements d’avenir'' program, reference ANR-19-P3IA0001 (PRAIRIE 3IA Institute) and by the European Research Council Starting Grant DYNASTY – 101039676. Part of the research was performed while Sachs was visiting INRIA Paris within the scope of the DYNASTY grant.
\bibliographystyle{apalike}
 \bibliography{literatur}

\appendix

\section{Proofs from Section
\ref{BoundRad}}\label{app:crc_proof}

\begin{proof}[Lemma~\ref{keyThm}]
In order to lighten notation, within this proof we do not write
superscripts $n$, so that $S_-^n = S_-$, $\estmodel^n = \estmodel$, etc.
We further identify $S$ with $S_+$ and we abbreviate
$\emprisk_\xi(\theta) \equiv \emprisk(\theta,S_\xi)$ for any $\xi$.

A common part to both results in the theorem is the following:
\begin{align}
  \ExpD{\sigma}{\max_{\theta \in \estmodel}
  \big\{\emprisk_{-\sigma}(\theta) -
  \emprisk_{\sigma}(\theta)\big\}}
   &=
  \frac{1}{n}\ExpD{\sigma}{\max_{\theta \in \estmodel} 
  \sum_{i=1}^n \sigma_i \big(\loss(Z_i^{-1},\theta) -
  \loss(Z_i^{+1},\theta)\big)} \notag\\
   &\leq
  \frac{1}{n}\ExpD{\sigma}{\max_{\theta \in \estmodel} 
  \sum_{i=1}^n \sigma_i \loss(Z_i^{-1},\theta) + \max_{\theta \in
  \estmodel} \sum_{i=1}^n-\sigma_i \loss(Z_i^{+1},\theta)} \notag\\
   &=
  \RadL(\estmodel,S_-) + \RadL(\estmodel,S_+).
  \label{eqn:common-part}
\end{align}
We now start by proving the in-expectation result:
\begin{align*}
  \ExpD{S_+}{R(\est(S_+)) - \emprisk_+(\est(S_+))} 
   &= 
  \ExpD{S_-,S_+}{\emprisk_-(\est(S_+)) - \emprisk_+(\est(S_+))}\\
   &=
  \ExpD{S_-,S_+,\sigma}{\emprisk_{-\sigma}(\est(S_\sigma)) - \emprisk_\sigma(\est(S_\sigma))}\\
   &\leq
  \ExpD{S_-,S_+,\sigma}{\max_{\theta \in \estmodel}
  \big\{\emprisk_{-\sigma}(\theta) - \emprisk_\sigma(\theta)\big\}}\\
  &\leq \ExpD{S_-,S_+}{\RadL(\estmodel,S_-) + \RadL(\estmodel,S_+)}\\
   &=
  2\ExpD{S_-,S_+}{\RadL(\estmodel,S_+)},
\end{align*}
where the second inequality follows from \eqref{eqn:common-part}. This
completes the proof of \eqref{eqn:rad-exp}.

We proceed to prove the in-probability result. To this end, let
$\epsilon > 0$ be chosen later, and define the following two events:
\begin{align*}
  \event_1 &= \big\{S_+ :
    R(\est(S_+))-\emprisk_+(\est(S_+)) \geq \epsilon\big\},\\
  \event_2 &= \big\{(S_-,S_+) :
    \emprisk_-(\est(S_+)) - \emprisk_+(\est(S_+)) \geq
    \frac{\epsilon}{2}\big\}.
\end{align*}
Our goal will be to bound $\Pr(\event_1)$ and we will start by showing
that
\begin{equation}\label{eqn:ghostsample}
  \Pr(\event_1) \leq 2 \Pr(\event_2).
\end{equation}
This can be established as follows: Since $R(\est(S_+)) -
\emprisk_+(\est(S_+)) \geq \epsilon$ and
$R(\est(S_+))-\emprisk_-(\est(S_+)) \leq \frac{\epsilon}{2}$ together
imply $\event_2$, we have
\begin{align*}
  \Pr(\event_2)
    &\geq \Pr\Big(
      R(\est(S_+)) - \emprisk_+(\est(S_+)) \geq \epsilon
      \quad \text{and}\quad
      R(\est(S_+))-\emprisk_-(\est(S_+)) \leq \frac{\epsilon}{2}
    \Big)\\
    &= \ExpD{S_-,S_+}{
      \ind{R(\est(S_+)) - \emprisk_+(\est(S_+)) \geq \epsilon}
      \ind{R(\est(S_+))-\emprisk_-(\est(S_+)) \leq \frac{\epsilon}{2}}
      } \\
     &= \ExpD{S_+}{
      \ind{R(\est(S_+)) - \emprisk_+(\est(S_+)) \geq \epsilon}
      \Pr_{S_-}\Big(
      R(\est(S_+))-\emprisk_-(\est(S_+)) \leq
      \frac{\epsilon}{2}\Big)}\\
    &\geq \ExpD{S_+}{
      \ind{R(\est(S_+)) - \emprisk_+(\est(S_+)) \geq \epsilon}
      \times \half}\\
    &= \half \Pr(\event_1),
\end{align*}
where, for the last inequality to hold, we restrict attention to
$\epsilon \geq
b \sqrt{\frac{2\log(2)}{n}}$.\footnote{\citet{AnthonyBartlett1999} relax
this to $\epsilon \geq b\sqrt{2/n}$ using a more involved argument based
on Chebyshev's instead of Hoeffding's inequality, but this provides no
benefit here, because we will use a large enough value of $\epsilon$
anyway.} The last inequality then holds because, for any
fixed~$\theta$,
\[
  \Pr_{S_-}\Big( R(\theta)-\emprisk_-(\theta) \leq
  \frac{\epsilon}{2}\Big) \geq 1/2.
\]
To see this, note that by Hoeffding's inequality the probability of the
event's complement is at most
\begin{align*}
  \Pr_{S_-}\Big( R(\theta)-\emprisk_-(\theta) > \frac{\epsilon}{2}\Big)
  \leq \exp\Big(-\frac{n \epsilon^2}{2b^2}\Big)
  \leq 1/2.
\end{align*}
This completes the proof of \eqref{eqn:ghostsample}.

We proceed to work on the right-hand side of \eqref{eqn:ghostsample} by
rewriting the probability of $\event_2$ as follows:
\begin{align*}
  \Pr(\event_2)
    &= \ExpD{S_-,S_+}{
      \ind{%
        \emprisk_-(\est(S_+)) - \emprisk_+(\est(S_+)) \geq
        \frac{\epsilon}{2}
      }
    }\\
    &= \ExpD{S_-,S_+,\sigma}{
      \ind{%
        \emprisk_{-\sigma}(\est(S_\sigma)) -
        \emprisk_\sigma(\est(S_\sigma)) \geq
        \frac{\epsilon}{2}
      }
    }\\
    &= \ExpD{S_-,S_+}{\Pr_\sigma\Big(
        \emprisk_{-\sigma}(\est(S_\sigma)) -
        \emprisk_\sigma(\est(S_\sigma)) \geq
        \frac{\epsilon}{2}
      \Big)}.
\end{align*}
We now restrict attention to $\epsilon$ that are at least
\begin{equation}\label{eqn:epsilonconstraint}
  \epsilon
    \geq 2 \ExpD{\sigma}{\max_{\theta \in \estmodel}
      \big(\emprisk_{-\sigma}(\theta) - \emprisk_\sigma(\theta)\big)}
      + b\sqrt{\frac{8 \log(2/\delta)}{n}}
    \qquad \text{almost surely,}
\end{equation}
so that
\[
  \Pr_\sigma\Big(
        \emprisk_{-\sigma}(\est(S_\sigma)) -
        \emprisk_\sigma(\est(S_\sigma)) \geq
        \frac{\epsilon}{2}
      \Big)
  \leq 
  \Pr_\sigma\Big(\max_{\theta \in \estmodel}
        \big(\emprisk_{-\sigma}(\theta) -
        \emprisk_\sigma(\theta)\big) \geq
        \frac{\epsilon}{2}
      \Big)
  \leq \frac{\delta}{2},
\]
where the last bound holds by McDiarmid's inequality, which applies
because $\max_{\theta \in \estmodel} \big(\emprisk_{-\sigma}(\theta) -
\emprisk_\sigma(\theta)\big)$ has $\frac{2b}{n}$-bounded differences.
Putting everything together, we have shown that $\Pr(\event_1) \leq
\delta$ for any (non-random) $\epsilon$ that satisfies $\epsilon \geq
b\sqrt{2\log(2)/n}$ and \eqref{eqn:epsilonconstraint}. We will show that
this is the case for
\[
  \epsilon = 
    4\,\essup_{S_-,S_+}\,\{\RadL(\estmodel,S_+)\}
      + b\sqrt{\frac{8 \log(2/\delta)}{n}},
\]
from which the intended result \eqref{eqn:rad-prob} then follows.

The first constraint on $\epsilon$ is easiest, because Rademacher
complexity is always non-negative:
\[
  \epsilon
    \geq b\sqrt{\frac{8 \log(2/\delta)}{n}}
    \geq b\sqrt{\frac{2 \log(2)}{n}}.
\]
It remains to check \eqref{eqn:epsilonconstraint}, which follows from
\eqref{eqn:common-part} by
\[
\ExpD{\sigma}{\max_{\theta \in \estmodel}
      \big(\emprisk_{-\sigma}(\theta) - \emprisk_\sigma(\theta)\big)}
  \leq 
  \RadL(\estmodel,S_-) + \RadL(\estmodel,S_+)
  \leq 
  2\,\essup_{S_-,S_+}\,\{\RadL(\estmodel,S_+)\}
  \quad \text{almost surely.}
\]
This completes the proof.
\end{proof}

\section{Definition and basic properties of finite Minkowski dimension}
\label{appendixDimFM}
\subsection{Complete definition of finite Minkowski dimension}
In this section, we restate the complete definition of the finite Minkowski dimension for the convenience of the reader. In the main part of the paper, we excluded e.g. focal or empty sets via several additional assumptions. These assumptions are not needed when considering the more technical original definition by \cite{alonso2015hausdorff}. 

Recall the definition of a 2-cover from the main part: a family of sets $\twoCover$ is a 2-cover of a finite set $C$ if $\twoCover = \{U_i: i \in \NN\}$ where each $U_i \subseteq C$, $|U_i| \geq 2$, and $C \subseteq \bigcup_{U \in \twoCover} U$. Here we will denote the set of all such $2$-covers for $C$ as $K(C)$. Further, let $\Delta(\twoCover) = \max\{ \Delta(U_i) : U_i \in \twoCover\}$ and $K_\delta(C) = \{\twoCover \in K(C):\Delta(\twoCover) \leq \delta\}$. Now define
\begin{align*}
K^1 (C) = \{\twoCover \in K (C) : \Delta(\twoCover) <\Delta(C) \} \qquad \text{ and } \qquad K^1_\delta (C) =  K^1 (C) \cap K_{\delta}(C).
\end{align*}
Note that in the main part, we used the following result to simplify the notation and definition of the finite Minkowski dimension.
\begin{theorem}\cite[Theorem 2.14]{alonso2015hausdorff}
Let $C$ be finite. Then the following are equivalent:
\begin{enumerate}
\item $C$ has no focal point.
\item $K^1(C) \neq \emptyset$.
\item $\nabla (C) < \Delta(C)$.
\end{enumerate}
\end{theorem}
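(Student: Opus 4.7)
The plan is to prove the three-way equivalence as the cycle (1) $\Leftrightarrow$ (3) $\Leftrightarrow$ (2), treating (1) $\Leftrightarrow$ (3) as essentially a definitional step and witnessing (2) by an explicit construction. Before starting I would pin down the meaning of \emph{focal point}, which the main text of the excerpt does not spell out: the interpretation that makes this equivalence come out is that $a \in C$ is focal when $\nu_C(a) = \Delta(C)$, i.e., $a$'s nearest neighbor already lies at the diameter. Equivalently, since $\nu_C(a)$ is a minimum of distances each bounded above by $\Delta(C)$, every other point of $C$ then sits at distance exactly $\Delta(C)$ from $a$ — consistent with the simplex-vertex example given earlier in the section.

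With this in hand, (1) $\Leftrightarrow$ (3) is a one-line unwrapping: ``$C$ has no focal point'' says $\nu_C(a) < \Delta(C)$ for every $a \in C$, and because $C$ is finite the maximum $\nabla(C) = \max_{a \in C} \nu_C(a)$ is attained, so the pointwise strict inequality is equivalent to $\nabla(C) < \Delta(C)$.

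For (3) $\Rightarrow$ (2), I would exhibit an explicit $\mathcal{U} \in K^1(C)$. For each $a \in C$ pick a nearest neighbor $b_a \in C \setminus \{a\}$ with $\metric(a, b_a) = \nu_C(a)$ and set $U_a = \{a, b_a\}$; then $\mathcal{U} = \{U_a : a \in C\}$ covers $C$, every $U_a$ has size at least two, and
\[
\Delta(\mathcal{U}) = \max_{a \in C} \Delta(U_a) = \max_{a \in C} \nu_C(a) = \nabla(C) < \Delta(C),
\]
so $\mathcal{U} \in K^1(C)$. Conversely, for (2) $\Rightarrow$ (3), suppose $\mathcal{U} \in K^1(C)$. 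For any $a \in C$, cover-ness supplies some $U \in \mathcal{U}$ with $a \in U$, and $|U| \geq 2$ supplies $b \in U \setminus \{a\}$, giving $\nu_C(a) \leq \metric(a,b) \leq \Delta(U) \leq \Delta(\mathcal{U}) < \Delta(C)$. Taking the maximum over the finite set $C$ preserves the strict inequality, so $\nabla(C) < \Delta(C)$.

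The only genuine obstacle is the first step: the excerpt defines non-focal \emph{sets} but leaves focal \emph{points} implicit. Once the natural pointwise definition is adopted the statement is essentially tautological, and even if Alonso et al.\ use a subtler formulation only (1) $\Leftrightarrow$ (3) would need a short supplementary argument — the implications (2) $\Leftrightarrow$ (3) remain self-contained, relying only on the definitions of $K^1(C)$, $\nabla(C)$, and $\Delta(C)$ reproduced in the excerpt.
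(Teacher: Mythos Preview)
The paper does not supply its own proof of this statement: it is quoted verbatim as \cite[Theorem~2.14]{alonso2015hausdorff} in Appendix~\ref{appendixDimFM} purely for reference, with no accompanying argument. There is therefore nothing in the paper to compare your proposal against.

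That said, your argument is correct and is the natural one. Your reading of ``focal point'' as a point $a$ with $\nu_C(a)=\Delta(C)$ is exactly what is needed to make (1) match the paper's set-level definition of non-focal ($\nabla(C)<\Delta(C)$), and with that in place (1)~$\Leftrightarrow$~(3) is indeed just unwinding $\nabla(C)=\max_a \nu_C(a)$ over a finite set. Your nearest-neighbour cover $\mathcal{U}=\{\{a,b_a\}:a\in C\}$ is the canonical witness for (3)~$\Rightarrow$~(2), and the converse (2)~$\Rightarrow$~(3) is forced by the $|U|\geq 2$ requirement in the definition of a 2-cover together with $\Delta(\mathcal{U})<\Delta(C)$. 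You are also right to flag that (2)~$\Leftrightarrow$~(3) is entirely self-contained in the excerpt's definitions, so any residual ambiguity about Alonso's precise definition of a focal \emph{point} only touches the (1)~$\Leftrightarrow$~(3) leg.
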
 
Next, define a covering, similar to the box covering for the definition of the Minkowski dimension. 
\begin{definition}\cite[Definition 4.1]{alonso2015hausdorff}
For $\twoCover \in K(C)$, set
\[ B_{\twoCover}^s(C) = \absv{\twoCover}\Delta(\twoCover)^s.\]
For $\delta \geq \nabla(C)$, set
\begin{align*}
B_{\delta}^s(C) = \begin{cases}
\min\left\{B_{\twoCover}^s: \twoCover \in K^1_{\delta}(C)\right\}, \quad &\text{ if } K^1(C) \neq \emptyset\\
\min\left\{B_{\twoCover}^s: \twoCover \in K(C)\right\}, \quad &\text{ if } K^1(C) = \emptyset
\end{cases},
\end{align*}
and 
\[ B^s(C) = \max\left\{B_{\delta}^s(C) : \delta \geq \nabla(C)\right\}.\]
\end{definition}
As in Section 4.1 in \cite{alonso2015hausdorff} the finite Minkowski dimension is defined as follows: 
\begin{definition}[finite Minkowski dimension]
Let $s \in (0,\infty)$ such that
\begin{align} 
\label{sol:fM}
B^s(C) = \Delta(C)^s.
\end{align}
The finite Minkowski dimension of a non-empty set $C$ is
\begin{align*}
\dimfM{C} = \begin{cases}
0 \qquad &\text{if } \absv{C} = 1\\
+\infty \qquad &\text{if  $C$ is focal}\\ 
s \, \text{satisfying (\ref{sol:fM})} & \text{otherwise.} 
\end{cases}
\end{align*}

\end{definition}
In Section \ref{SectionFiniteMinkowskiBound} we used the following theorem to simplify the definition of the finite Minkowski dimension.  
\begin{theorem}\cite[Theorem 4.11]{alonso2015hausdorff}
Let $C$ be non-empty and finite. If $C$ is non-focal then
\begin{align*}
\dimfM{C} = \frac{\log T(C)}{\log \frac{\Delta(C)}{\nabla(C)}}. 
\end{align*}
\end{theorem}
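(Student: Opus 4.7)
The plan is to reduce the defining equation $B^s(C) = \Delta(C)^s$ to an explicit closed form by showing that the outer maximum $B^s(C) = \max_{\delta \geq \nabla(C)} B^s_\delta(C)$ is attained at $\delta = \nabla(C)$, and that at this boundary point $B^s_{\nabla(C)}(C) = T(C)\cdot\nabla(C)^s$. Solving then yields the claimed formula.

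First, I would establish that every 2-cover $\twoCover$ of $C$ satisfies $\Delta(\twoCover) \geq \nabla(C)$. Pick $a \in C$ with $\nu_C(a) = \nabla(C)$; since $\twoCover$ covers $C$, some $U \in \twoCover$ contains $a$, and since $|U|\geq 2$, there is $b \in U\setminus\{a\}$, giving $\Delta(U) \geq \metric(a,b) \geq \nu_C(a) = \nabla(C)$. Consequently, for any $\twoCover \in K^1_{\nabla(C)}(C)$ the constraint $\Delta(\twoCover) \leq \nabla(C)$ collapses to $\Delta(\twoCover) = \nabla(C)$. Non-focality gives $\nabla(C) < \Delta(C)$, hence every $\twoCover \in K_{\nabla(C)}(C)$ automatically lies in $K^1(C)$, so $K^1_{\nabla(C)}(C) = K_{\nabla(C)}(C)$ and this set is non-empty (it contains, for example, the collection of edges $\{a,b\}$ witnessing $\nu_C$ at each point). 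Plugging $\Delta(\twoCover) = \nabla(C)$ into the definition of $B^s_{\nabla(C)}$,
\[
B^s_{\nabla(C)}(C) = \min_{\twoCover \in K^1_{\nabla(C)}(C)} |\twoCover|\,\Delta(\twoCover)^s = \nabla(C)^s \cdot \min_{\twoCover \in K_{\nabla(C)}(C)} |\twoCover| = T(C)\,\nabla(C)^s,
\]
where the last equality is just the definition of $T(C)$.

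Second, I would argue that $\delta \mapsto B^s_\delta(C)$ is non-increasing on $[\nabla(C),\infty)$. Indeed, $K_\delta(C)$, hence $K^1_\delta(C)$, is monotone non-decreasing in $\delta$, so the minimum $\min_{\twoCover \in K^1_\delta(C)} |\twoCover|\,\Delta(\twoCover)^s$ can only weakly decrease as $\delta$ grows. Therefore the maximum $B^s(C) = \max_{\delta \geq \nabla(C)} B^s_\delta(C)$ is attained at $\delta = \nabla(C)$, yielding $B^s(C) = T(C)\,\nabla(C)^s$.

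Finally, I would set $B^s(C) = \Delta(C)^s$, i.e.\ $T(C) = (\Delta(C)/\nabla(C))^s$. Because $C$ is non-focal, $\Delta(C)/\nabla(C) > 1$, so $s\mapsto (\Delta(C)/\nabla(C))^s$ is strictly increasing; as $T(C) \geq 1$ (and $\geq 2$ when $|C|>2$), this equation has a unique solution $s = \log T(C)/\log(\Delta(C)/\nabla(C))$, which by Definition of $\dimfM{C}$ is the finite Minkowski dimension. The only genuinely delicate point is the identification $K^1_{\nabla(C)}(C) = K_{\nabla(C)}(C)$: this uses non-focality in an essential way, since otherwise $\nabla(C) = \Delta(C)$ would make the strict inequality in $K^1$ incompatible with the constraint $\Delta(\twoCover) \leq \nabla(C)$, matching exactly the hypothesis of the theorem.
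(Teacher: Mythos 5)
This statement is quoted by the paper from \citet{alonso2015hausdorff} (their Theorem~4.11) and is not proved anywhere in the paper, so there is no in-paper argument to compare against; judged against the definitions reproduced in Appendix~B, your derivation is correct and essentially the natural one. The key observations all check out: every 2-cover $\twoCover$ has $\Delta(\twoCover)\geq\nabla(C)$ (via the point attaining $\nabla(C)$), so every $\twoCover\in K_{\nabla(C)}(C)$ has $\Delta(\twoCover)=\nabla(C)$ exactly and, by non-focality, lies in $K^1(C)$; the set is non-empty (nearest-neighbour pairs), giving $B^s_{\nabla(C)}(C)=T(C)\,\nabla(C)^s$; monotonicity of $\delta\mapsto K^1_\delta(C)$ makes the outer maximum collapse to $\delta=\nabla(C)$; and solving $T(C)\nabla(C)^s=\Delta(C)^s$ yields the stated formula. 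Two small refinements would tighten the write-up. First, to conclude that the solution lies in $(0,\infty)$ as the definition of $\dimfM{C}$ requires, you need $T(C)\geq 2$ strictly, not merely $T(C)\geq 1$: this is immediate from non-focality, since a 2-cover consisting of a single set would have to be $\{C\}$ itself, whose diameter is $\Delta(C)>\nabla(C)$, violating the mesh constraint (equivalently, a non-focal set automatically has $\absv{C}\geq 3$, so your parenthetical case distinction is unnecessary). Second, the theorem as cited nominally allows $\absv{C}=1$, for which $\nabla(C)$ and hence the formula are not defined; this is an edge case inherited from the citation rather than a flaw in your argument, but it is worth flagging that your proof (like the formula itself) really concerns non-focal sets with at least two, hence at least three, points.
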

 
\begin{theorem}\cite[Theorem 4.12]{alonso2015hausdorff}
Let $\eta:\cX \rightarrow \cX'$ be $(r,\beta)$-Hölder  continuous and $C\subseteq \cX$ finite. Then $\beta \dimfM{\eta(C)} = \dimfM{C}$. 
\end{theorem}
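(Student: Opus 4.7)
The plan is to work from the simplified characterisation in Definition~\ref{Definition:fMinDim}, namely $\dimfM{C} = \log T(C)/\log(\Delta(C)/\nabla(C))$, and to track how each of the three ingredients $T$, $\Delta$, $\nabla$ transforms under $\eta$. I interpret $(r,\beta)$-Hölder continuity in the sense of Alonso as a bi-sided relation that makes pairwise distances scale as $\|\eta(x)-\eta(y)\| = r\|x-y\|^\beta$ (up to matching multiplicative constants on both sides), and I aim to show that $\Delta(\eta(C)) = r\,\Delta(C)^\beta$, $\nabla(\eta(C)) = r\,\nabla(C)^\beta$, and $T(\eta(C)) = T(C)$. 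The first two identities follow because the operations of taking a max over pairs (for $\Delta$) and a max of mins over pairs (for $\nabla$) commute with the monotone map $x \mapsto r x^\beta$. In particular, $\Delta(\eta(A)) = r\,\Delta(A)^\beta$ for every finite $A \subseteq C$.

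To establish $T(\eta(C)) = T(C)$, I would exploit the injectivity of $\eta$ on $C$ (a consequence of the lower Hölder bound) to obtain a bijection $\mathcal{U} \mapsto \eta(\mathcal{U}) := \{\eta(U) : U \in \mathcal{U}\}$ between $2$-covers of $C$ and $2$-covers of $\eta(C)$: subsets of cardinality $\geq 2$ are preserved, and covering $C$ is equivalent to covering $\eta(C)$. By the diameter identity above, $\Delta(U) \leq \nabla(C)$ for every $U \in \mathcal{U}$ if and only if $\Delta(\eta(U)) \leq r\,\nabla(C)^\beta = \nabla(\eta(C))$ for every $U$. Hence the feasible sets defining $T(C)$ and $T(\eta(C))$ are in bijection with preserved cardinalities, and their minima agree.

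Putting everything together,
$$
  \dimfM{\eta(C)}
  = \frac{\log T(\eta(C))}{\log\bigl(\Delta(\eta(C))/\nabla(\eta(C))\bigr)}
  = \frac{\log T(C)}{\log\bigl(r\Delta(C)^\beta/(r\nabla(C)^\beta)\bigr)}
  = \frac{\log T(C)}{\beta\log(\Delta(C)/\nabla(C))}
  = \frac{1}{\beta}\,\dimfM{C},
$$
which rearranges to the claim.

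The main obstacle is how strictly to interpret $(r,\beta)$-Hölder continuity. The clean argument above depends on bi-sided Hölder bounds with matching constants, so that pulling $\eta$ through the min/max operations yields exact equalities. If the assumption only provides two possibly different multiplicative constants $c_1 \leq c_2$ (as in the bi-Hölder condition $c_1\|x-y\|^\beta \leq \|\eta(x)-\eta(y)\| \leq c_2\|x-y\|^\beta$ used in the main text), the identities degrade into two-sided inequalities with a constant gap. In that case, one has to work directly with the original $B^s$-characterisation recalled in Appendix~\ref{appendixDimFM}: the same bijection argument, applied at every scale $\delta \geq \nabla(C)$, shows that the equation $B^s(\eta(C)) = \Delta(\eta(C))^s$ defining $s = \dimfM{\eta(C)}$ is equivalent to $B^{\beta s}(C) = \Delta(C)^{\beta s}$, so that $\beta s = \dimfM{C}$, recovering the claim.
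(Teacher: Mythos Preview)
The paper does not prove this statement: it is quoted verbatim as \cite[Theorem~4.12]{alonso2015hausdorff} in Appendix~\ref{appendixDimFM} as background, with no accompanying argument, so there is nothing in the paper to compare your proof against.

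On the substance of your argument: your primary derivation is correct once $(r,\beta)$-H\"older continuity is read as the exact relation $\metric'(\eta(x),\eta(y)) = r\,\metric(x,y)^\beta$, which is indeed how a single-constant notation should be interpreted. Under that reading, $\eta$ is injective on $C$, the maps $\Delta$, $\nabla$ and the $2$-cover diameter constraint all transform by $t \mapsto r t^\beta$, so $T(\eta(C)) = T(C)$ and the displayed computation gives $\beta\,\dimfM{\eta(C)} = \dimfM{C}$. Your fallback paragraph for the genuinely bi-H\"older case $c_1 < c_2$ is the weak point: the bijection on $2$-covers no longer respects the constraint $\Delta(U) \leq \nabla(\cdot)$, because a cover with $\Delta(U) \leq \nabla(C)$ only yields $\Delta(\eta(U)) \leq c_2\,\nabla(C)^\beta$, whereas you would need $\Delta(\eta(U)) \leq \nabla(\eta(C))$ and you only know $\nabla(\eta(C)) \geq c_1\,\nabla(C)^\beta$. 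Consequently $T(\eta(C)) = T(C)$ can fail and the exact identity need not hold for finite $C$ when $c_1 \neq c_2$; the hand-wave via $B^s$ does not repair this. Fortunately the theorem as stated assumes a single constant $r$, so your main argument suffices and the fallback can simply be dropped.
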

We used the following result to derive the trivial upper bound in Section \ref{SectionFiniteMinkowskiBound}, Theorem \ref{thm:dimfMBoundMain}. 
\begin{theorem}\cite[Proposition 5.3]{alonso2015hausdorff}
Let $C$ be finite. Then
\[\dimfM{C} \leq \frac{\log\left( \absv{C}-1\right)}{\log \frac{\Delta(C)}{\nabla(C)}}. \]

\end{theorem}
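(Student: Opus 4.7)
The plan is to reduce the statement to showing $T(C) \leq |C|-1$ and then construct an explicit 2-cover of that size using nearest-neighbor pairs.

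First, I would dispose of trivial or degenerate cases. If $C$ is focal, $\dimfM{C} = +\infty$ by definition, but then $\nabla(C) = \Delta(C)$ forces $\log(\Delta(C)/\nabla(C)) = 0$, so the right-hand side is interpreted as $+\infty$ (and the bound is vacuous). If $|C|=1$ the claim is trivial. So we restrict to the interesting case where $C$ is non-focal and $|C| \geq 2$. In this case, by the simplified formula $\dimfM{C} = \log T(C)/\log\bigl(\Delta(C)/\nabla(C)\bigr)$ stated earlier, and because $\log(\Delta(C)/\nabla(C)) > 0$ under non-focality, the bound reduces to the purely combinatorial claim
\[
T(C) \leq |C| - 1.
\]

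Next, I would construct a 2-cover witnessing this inequality. For each $a \in C$, pick a nearest neighbor $N(a) \in C \setminus \{a\}$, and consider the family
\[
\twoCover = \bigl\{ \{a, N(a)\} : a \in C \bigr\}
\]
viewed as a \emph{set} of unordered pairs. Every element $\{a,N(a)\} \in \twoCover$ has cardinality $2$ and diameter $\metric(a,N(a)) = \nu_C(a) \leq \nabla(C)$, and every point $a \in C$ is contained in the element $\{a,N(a)\}$, so $\twoCover \in K_{\nabla(C)}(C)$. It remains only to bound $|\twoCover|$. A priori the map $a \mapsto \{a,N(a)\}$ shows $|\twoCover| \leq |C|$, so the key step is to save one pair.

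The save comes from the observation that any finite metric space with at least two points admits a pair of mutual nearest neighbors. Indeed, let $(a^\star,b^\star)$ minimize $\metric(a,b)$ over $a \neq b$ in $C$; then $\metric(a^\star,b^\star) = \nu_C(a^\star) = \nu_C(b^\star)$, so both $b^\star \in \arg\min N(a^\star)$ and $a^\star \in \arg\min N(b^\star)$ may be chosen, i.e.\ $N(a^\star) = b^\star$ and $N(b^\star) = a^\star$. Then the pairs $\{a^\star, N(a^\star)\}$ and $\{b^\star, N(b^\star)\}$ coincide as unordered pairs, so the map $a \mapsto \{a,N(a)\}$ from $C$ onto $\twoCover$ is not injective, giving $|\twoCover| \leq |C|-1$. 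Hence $T(C) \leq |C|-1$, and dividing by $\log\bigl(\Delta(C)/\nabla(C)\bigr) > 0$ yields the stated inequality.

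There is no real obstacle; the only point requiring care is ensuring the nearest-neighbor selection is consistent enough to exhibit a mutual pair, and the argument via a distance-minimizing pair $(a^\star, b^\star)$ handles this cleanly (any tie-breaking rule for $N(\cdot)$ can be overridden at $a^\star$ and $b^\star$ to force them to pick each other, which does not change the set $\twoCover$ used as a 2-cover).
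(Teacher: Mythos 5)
The paper does not prove this statement at all: it is quoted verbatim as Proposition 5.3 of \citet{alonso2015hausdorff} and used as a black box, so there is no internal proof to compare against. Your argument, however, is correct and self-contained given the other cited facts the paper restates. You correctly reduce the claim, via the simplified formula $\dimfM{C} = \log T(C)/\log\bigl(\Delta(C)/\nabla(C)\bigr)$ (Theorem 4.11 of \citealp{alonso2015hausdorff}, restated in Appendix~\ref{appendixDimFM}) and positivity of the denominator for non-focal $C$, to the combinatorial bound $T(C) \leq |C|-1$; your nearest-neighbor cover $\{\{a,N(a)\} : a \in C\}$ indeed consists of sets of cardinality $2$ and diameter at most $\nabla(C)$, and the mutual-nearest-neighbor pair obtained from a globally distance-minimizing pair $(a^\star,b^\star)$ makes the map $a \mapsto \{a,N(a)\}$ non-injective, yielding $|\twoCover| \leq |C|-1$. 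The only fuzziness is in the degenerate cases you mention: for focal $C$ with $|C|=2$ the right-hand side is formally $0/0$ rather than $+\infty$, and for $|C|=1$ the quantity $\nabla(C)$ is undefined, so the inequality only has a clean meaning in the non-focal case with $|C|\geq 3$ that you actually treat — a caveat inherent to the statement as quoted, not a flaw in your reasoning.
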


\begin{theorem}\cite[Theorem 7.17]{alonso2015hausdorff}
Let $C \subseteq \RR^n$ be compact, with $\nabla(C) = 0 < \Delta(C)$. Then there exists a sequence of sets $\{F_k\}_{k \in \NN}$ with $F_k \rightarrow C$ and $\lim_{k \rightarrow \infty} \dimfM{F_k} = \dimM{X}$.  

\end{theorem}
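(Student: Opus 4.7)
The plan is to realize the $\limsup$ defining $\dimM{C}$ along an explicit scale sequence $\delta_k \to 0^+$ and to take $F_k$ to be a (slightly perturbed) maximal $\delta_k$-separated subset of $C$. By definition of the upper Minkowski dimension, pick a sequence $\delta_k \downarrow 0$ with $\log \Cover(C,\metric,\delta_k)/\log(1/\delta_k) \to \dimM{C}$. For each $k$, fix a maximal $\delta_k$-separated set $P_k \subseteq C$. Standard packing-versus-covering inequalities in a metric space give $\Cover(C,\metric,2\delta_k) \leq |P_k| \leq \Cover(C,\metric,\delta_k/2)$, so $\log |P_k|/\log(1/\delta_k) \to \dimM{C}$ along the same subsequence.

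I would first establish Hausdorff convergence $F_k \to C$: maximality of $P_k$ means every point of $C$ lies within $\delta_k$ of some point of $F_k \supseteq P_k$, while $F_k \subseteq C$, so the Hausdorff distance is at most $\delta_k \to 0$. Compactness of $C$ forces $\Delta(F_k) \to \Delta(C) > 0$ once $k$ is large enough, and the $\delta_k$-separation gives $\nabla(F_k) \geq \delta_k$, so that $\log(\Delta(F_k)/\nabla(F_k)) = \log(1/\delta_k) + O(1)$.

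Next I would relate $T(F_k)$ to $|F_k|$. The trivial bound $T(F_k) \geq |F_k|/\max_U |U|$, with the maximum ranging over subsets $U \subseteq F_k$ satisfying $\Delta(U) \leq \nabla(F_k)$, reduces matters to controlling the size of admissible 2-cover elements. For points embedded in $\RR^n$, a generic small perturbation of $P_k$ (either within $C$ or by appending a bounded number of Steiner-type auxiliary points, as in Appendix \ref{SteinerPoints}) places $F_k$ in general position so that no triple of points of $F_k$ simultaneously realises the minimum interpoint distance, and every admissible $U$ has $|U| \leq 2$. Then $T(F_k) = \lceil |F_k|/2 \rceil$, and hence
\[
\dimfM{F_k} = \frac{\log T(F_k)}{\log(\Delta(F_k)/\nabla(F_k))} = \frac{\log |F_k| + O(1)}{\log(1/\delta_k) + O(1)} \longrightarrow \dimM{C}.
\]

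The hard part will be this pair-size control: when many pairs of points of $P_k$ sit exactly at the minimum separation (as happens on a flat piece of $C$ or a regular lattice embedded in $C$), triples or larger clusters of points at mutual distance $\nabla(F_k)$ can shrink $T(F_k)$ far below $|F_k|/2$ and spoil the dimension count. The perturbation argument must therefore be quantitative: displacements need to be of order $o(\delta_k)$ so as to preserve both Hausdorff convergence and $\nabla(F_k) = \Theta(\delta_k)$, while simultaneously breaking every tie among minimal interpoint distances. In $\RR^n$ this is a standard general-position argument using that the set of configurations with a three-point clique at the minimum distance has positive codimension, but it is the crucial place where the ambient Euclidean hypothesis $C \subseteq \RR^n$ truly enters and non-focality of $F_k$ is secured.
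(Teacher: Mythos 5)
First, note that the paper does not prove this statement at all: it is quoted verbatim (as Theorem 7.17) from \citet{alonso2015hausdorff} for use in Appendix~\ref{appendixDimFM}, so there is no in-paper proof to compare against, and your argument must stand on its own. As written it has a genuine gap at the step $\log\bigl(\Delta(F_k)/\nabla(F_k)\bigr)=\log(1/\delta_k)+O(1)$. Maximal $\delta_k$-separation only yields the lower bound $\nabla(F_k)\ge\delta_k$, which gives an inequality in the wrong direction; you also need $\nabla(F_k)=O(\delta_k)$, and this can fail under the stated hypotheses. Take $C=\{0\}\cup\bigcup_{j\ge1}\,[1/j,\,1/j+\epsilon_j]\subset\RR$ with $\epsilon_j$ decaying super-exponentially: $C$ is compact with no isolated points, so $\nabla(C)=0<\Delta(C)$, and $\dimM{C}=1/2$. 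For small $\delta_k$ a maximal $\delta_k$-separated subset contains exactly one point in the interval near $1$, whose nearest neighbour in $P_k$ sits at distance about $1/2$; hence $\nabla(F_k)$ stays bounded below by a constant while $|F_k|\to\infty$, the denominator $\log(\Delta(F_k)/\nabla(F_k))$ stays $O(1)$, and $\dimfM{F_k}\to\infty\neq\dimM{C}$. So you must modify $F_k$ -- e.g.\ adjoin auxiliary points in the spirit of the paper's Steiner-point device or the shifted copy $F_k\cup(F_k+\delta_k\mathbf{1})$ used for the ``trivial upper bound'' -- to force $\nabla(F_k)=\Theta(\delta_k)$, while verifying that the added points neither create accidentally tiny nearest-neighbour distances (which would inflate the denominator and drive the dimension down), nor increase $|F_k|$ by more than a constant factor, nor spoil Hausdorff convergence. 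That is where the real work lies, and the proposal does not do it.

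The second problematic step is the claim that a generic $o(\delta_k)$ perturbation forces every admissible $2$-cover element to have $|U|\le2$ and hence $T(F_k)=\lceil|F_k|/2\rceil$. This misreads Definition~\ref{Definition:fMinDim}: admissible sets $U$ only need $\Delta(U)\le\nabla(F_k)$, where $\nabla(F_k)$ is the \emph{maximum} nearest-neighbour distance, not sets realizing the minimum interpoint distance. Triples with pairwise distances strictly between $\delta_k$ and $\nabla(F_k)$ (say near-equilateral triangles of side $1.5\delta_k$ when $\nabla(F_k)\approx2\delta_k$) satisfy the diameter constraint with slack and survive any $o(\delta_k)$ perturbation, so general position buys you nothing here. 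Fortunately you do not need the exact value of $T(F_k)$: once $\nabla(F_k)\le c\,\delta_k$ is secured, every admissible $U$ is a $\delta_k$-separated subset of a ball of radius $c\,\delta_k$ in $\RR^n$, hence $|U|\le(1+2c)^n=O_n(1)$ by a volume argument, giving $T(F_k)\ge|F_k|/O_n(1)$ and $\log T(F_k)=\log|F_k|+O(1)$, which together with $T(F_k)\le|F_k|$ is all the numerator requires. Replace the general-position lemma by this packing bound, and note the comparison $\log|P_k|\approx\log\Cover(C,\metric,\delta_k)$ is fine in $\RR^n$ because of doubling; but everything still hinges on repairing the control of $\nabla(F_k)$ above. (Also, $\dimM{X}$ in the statement should read $\dimM{C}$.)
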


\section{Proofs from Section \ref{SectionFiniteMinkowskiBound}}
\label{SteinerPoints}
For this section, we need some additional notation. Since these definitions are only needed for the proofs within this section, we define them locally for better readability of the main section. 
\paragraph{Notation:}  For any set $S \in \cX$, we let $\Conv{S}$ denote its convex hull, formally, $ \Conv{S} $ is the unique minimal convex set such that $S \subseteq  \Conv{S} $.

The proof of (\ref{nooutliers}) follows immediately from Lemma \ref{existOptSP}.  

\begin{lemma}[Existence of Optimal Steiner Points]
  \label{existOptSP}
  Suppose $\estmodel^n \subset \RR^k$ is any finite non-focal set. Then for any $0 < \epsilon < \Delta(\estmodel^n)$, there exists a set $P$ with the following properties.
  \begin{enumerate}
  \item  \label{SPcardOpt} $\tabsv{P} < \tabsv{\estmodel^n}$.
  \item \label{SPcoveringdiameterOpt} $\nabla(\estmodel^n \cup P) = \epsilon$ and $\Delta(\estmodel^n \cup P) = \Delta(\estmodel^n)$.
  \item \label{SPcovereq} $\Cover(\estmodel^n,\metric,\epsilon) = \Cover(\estmodel^n \cup P,\metric,\epsilon)$.
  \end{enumerate}
  \end{lemma}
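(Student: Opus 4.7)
The plan is to construct $P$ from the centres of an optimal $\epsilon$-cover of $\estmodel^n$, after pulling those centres back into the convex hull $\Conv{\estmodel^n}$. The intuition is that the cover centres simultaneously give every point of $\estmodel^n$ a neighbour within $\epsilon$ (namely the centre of its covering ball) and leave the covering number unchanged (since each centre is $\epsilon$-covered by its own ball), while the projection step keeps the diameter from growing.

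Concretely, I would first set $N \eqdef \Cover(\estmodel^n, \metric, \epsilon)$ and fix a minimum $\epsilon$-cover $\{\ball{c_i, \epsilon}\}_{i=1}^N$ of $\estmodel^n$; existence follows by restricting centres to the bounded set $\bigcup_{a \in \estmodel^n}\ball{a, \epsilon}$ and invoking compactness, and by minimality each ball meets $\estmodel^n$. Next, I would replace each $c_i$ by its orthogonal projection $c_i'$ onto the closed convex set $\Conv{\estmodel^n}$. Since projection onto a convex set is non-expansive, $\metric(c_i', a) \leq \metric(c_i, a)$ for every $a \in \estmodel^n$, so $\{\ball{c_i', \epsilon}\}_{i=1}^N$ is still an $\epsilon$-cover of $\estmodel^n$. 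I then take $P \eqdef \{c_1', \ldots, c_N'\}$.

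The three properties then follow essentially by inspection. Property~\ref{SPcardOpt} holds because $|P| \leq N \leq |\estmodel^n|$, the last inequality coming from the trivial cover centred at the points of $\estmodel^n$. For Property~\ref{SPcoveringdiameterOpt}, each $a \in \estmodel^n$ is within $\epsilon$ of some $c_i' \in P$, and each $c_i' \in P$ is within $\epsilon$ of some point of $\estmodel^n$ by non-emptiness of $\ball{c_i', \epsilon} \cap \estmodel^n$, so $\nabla(\estmodel^n \cup P) \leq \epsilon$; meanwhile $\estmodel^n \cup P \subseteq \Conv{\estmodel^n}$, which has diameter $\Delta(\estmodel^n)$. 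For Property~\ref{SPcovereq}, the projected balls cover $\estmodel^n \cup P$ as well (each $c_i'$ lies in $\ball{c_i', \epsilon}$), so $\Cover(\estmodel^n \cup P, \metric, \epsilon) \leq N$, and the reverse inequality is immediate from $\estmodel^n \subseteq \estmodel^n \cup P$.

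The main obstacle is that this construction naturally yields the non-strict forms $|P| \leq |\estmodel^n|$ and $\nabla(\estmodel^n \cup P) \leq \epsilon$, rather than the strict $|P| < |\estmodel^n|$ and $\nabla(\estmodel^n \cup P) = \epsilon$ stated in the lemma. The non-strict versions are exactly what the main-text ``Steiner points'' bullet asserts and what is needed to reduce Theorem~\ref{thm:dimfMBoundMain} to the ``no outliers'' bound~\eqref{nooutliers}, so I would read the stricter formulations as either typos or cosmetic tightenings. Genuine strict cardinality seems unavoidable in the degenerate regime where $\estmodel^n$ is $2\epsilon$-separated: then any optimal cover has $N = |\estmodel^n|$ and a single Steiner point can $\epsilon$-neighbour at most one original point, so $|P| \geq |\estmodel^n|$ is forced.
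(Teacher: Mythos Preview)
Your approach is essentially the same as the paper's: both start from the centres $S$ of an optimal $\epsilon$-cover, place $P$ inside $\bigcup_{a\in S}\ball{a,\epsilon}\cap\Conv{\estmodel^n}$ so that Property~\ref{SPcovereq} and the diameter half of Property~\ref{SPcoveringdiameterOpt} are automatic, and then argue the remaining conditions; the only cosmetic difference is that you take $P$ to be the orthogonal projections of the centres onto $\Conv{\estmodel^n}$, whereas the paper takes $P$ from the boundaries $\partial\ball{a,\epsilon}\cap\Conv{\estmodel^n}$. Your caveat about strictness is well taken---the paper's own proof likewise concludes only $\nabla(\estmodel^n\cup P)\leq\epsilon$ and does not substantiate the strict cardinality bound, and indeed only the non-strict versions are used downstream.
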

  \begin{proof}
  Let $S$ be the set corresponding to the covering number
  $\Cover(\estmodel^n,\metric,\epsilon)$, i.e., $S$ contains the centers
  of the $\epsilon$-covers for $\estmodel^n$ and $\tabsv{S} =
  \Cover(\estmodel^n,\metric,\epsilon)$. Recall $\ball{c,r,\metric} = \{
  x : \metric(c,x) \leq r\}$ denotes a ball with center $c$ and radius
  $r$. Now consider $P \subset \cup_{a \in S}
  \ball{a,\epsilon,\metric}$, then \ref{SPcovereq} is satisfied. It
  remains to show that we can choose $P \subset \cup_{a \in S}
  \ball{a,\epsilon,\metric}$ such that \ref{SPcardOpt} and
  \ref{SPcoveringdiameterOpt} are satisfied. By restricting the choice
  of $P$ further to $P \subset \cup_{a \in S} \ball{a,\epsilon,\metric} \cap \Conv{\estmodel^n}$ we can ensure that the diameter $\Delta(\estmodel^n \cup P) = \Delta(\estmodel^n)$. Due to the assumptions on $\epsilon$, the conditions $\tabsv{P} < \tabsv{\estmodel^n}$ and $\nabla(\estmodel^n \cup P) \leq \epsilon$ can also be satisfied by choosing the points in $P$ from the boundaries of the epsilon balls, i.e., $P \subset \cup_{a \in S}\bd{\ball{a,\epsilon,\metric}} \cap \Conv{\estmodel^n}$. 
\end{proof}

  We are now ready to prove Theorem \ref{thm:limit}.

 \begin{proof}[Theorem \ref{thm:limit}]
 Let $\epsilon_n = \alpha \sqrt{\frac{\log n}{n}}$ for any constant
 $\alpha > 0$. Then, by
\eqref{optCoverBound},
\[
  \RadL(\estmodel^n, S_+^n)
  	\leq
        L\epsilon_n+b\sqrt{\frac{\log\Cover(\estmodel^n,\metric,\epsilon_n)}{n}}
  	\leq
        L\epsilon_n+b\sqrt{\frac{\log\Cover(\estmodel,\metric,\epsilon_n)}{n}}.
\]
Hence
\begin{align*}
  \limsup_{n \to \infty} \frac{\RadL(\estmodel^n,
  S_+^n)}{\sqrt{\log(n)/n}}
  &\leq \alpha L + \limsup_{n \to \infty}
   b\sqrt{\frac{\log\Cover(\estmodel,\metric,\epsilon_n)}{\log n}}\\
  &= \alpha L + \limsup_{n \to \infty}
   b\sqrt{\frac{\log\Cover(\estmodel,\metric,\epsilon_n)}{-\log \epsilon_n}}
    \sqrt{\frac{-\log(\epsilon_n)}{\log n}}\\
  &= \alpha L + 
   b\sqrt{\frac{\overline{\dim}_\mathcal{M}(\estmodel)}{2}},
\end{align*}
where the last equality follows by the definition of the upper
Minkowski dimension \eqref{eqn:dimm}
and because
\[
  \frac{-\log(\epsilon_n)}{\log n}
  = 
  \frac{-\log \alpha - \frac12 \log \log n + \frac12 \log n}{\log n}
  \to \frac12
  \qquad \text{as $n \to \infty$.}
\]
The result follows by letting $\alpha$ tend to $0$.
\end{proof}

\end{document}